\newtheorem{theorem}{Theorem}[section]
\newtheorem{lemma}[theorem]{Lemma}
\newcommand{\multiline}[1]{%
  \begin{tabularx}{\dimexpr\linewidth-\ALG@thistlm}[t]{@{}X@{}}
    #1
  \end{tabularx}
}
\begin{document}
%
\title{Federated Ensemble Model-based Reinforcement Learning in Edge Computing}
\author{Jin~Wang,
        Jia~Hu,
        Jed Mills, 
        Geyong~Min,
        Ming~Xia,
        and~Nektarios~Georgalas %
\IEEEcompsocitemizethanks{\IEEEcompsocthanksitem Jin Wang, Jia Hu, Jed Mills, and Geyong Min are with the Department
of Computer Science, University of Exeter, United Kingdom.\protect\\
E-mail: \{jw855, j.hu, jm729, g.min\}@exeter.ac.uk 
\IEEEcompsocthanksitem Ming Xia is with Google, California, U.S.A. \protect \\
E-mail: xiaming2006@gmail.com
\IEEEcompsocthanksitem Nektarios Georgalas is with Applied Research Department, British Telecom, United Kingdom. \protect \\
E-mail: nektarios.georgalas@bt.com\protect 
\IEEEcompsocthanksitem Corresponding authors: Jia Hu and Geyong Min. }
}

%

\makeatletter
\long\def\@IEEEtitleabstractindextextbox#1{\parbox{0.922\textwidth}{#1}} 
\makeatother

\IEEEtitleabstractindextext{%
\begin{abstract}

Federated learning (FL) is a privacy-preserving distributed machine learning paradigm that enables collaborative training among geographically distributed and heterogeneous devices without gathering their data. Extending FL beyond the supervised learning models, federated reinforcement learning (FRL) was proposed to handle sequential decision-making problems in edge computing systems. However, the existing FRL algorithms directly combine model-free RL with FL, thus often leading to high sample complexity and lacking theoretical guarantees. To address the challenges, we propose a novel FRL algorithm that effectively incorporates model-based RL and ensemble knowledge distillation into FL for the first time. Specifically, we utilise FL and knowledge distillation to create an ensemble of dynamics models for clients, and then train the policy by solely using the ensemble model without interacting with the environment. Furthermore, we theoretically prove that the monotonic improvement of the proposed algorithm is guaranteed. The extensive experimental results demonstrate that our algorithm obtains much higher sample efficiency compared to classic model-free FRL algorithms in the challenging continuous control benchmark environments under edge computing settings. The results also highlight the significant impact of heterogeneous client data and local model update steps on the performance of FRL, validating the insights obtained from our theoretical analysis.
\end{abstract}

\begin{IEEEkeywords}
Edge computing, distributed machine learning, federated learning, deep reinforcement learning
\end{IEEEkeywords}
}

\maketitle
\IEEEdisplaynontitleabstractindextext

%
\IEEEpeerreviewmaketitle

\IEEEraisesectionheading{\section{Introduction}\label{sec:introduction}}
  
\IEEEPARstart{T}he advancements in deep learning (DL) \cite{lecun2015deep} algorithms and high-performance computing technologies are fundamental to the tremendous successes of artificial intelligence (AI) in many aspects of our societies, including transportation, healthcare, education, etc. The emerging AI-empowered applications such as smart manufacturing, autonomous driving, and smart healthcare generate large volumes of data on the user side. To enable real-time data processing for these emerging applications, edge computing was proposed to shift computation and storage resources from the remote Cloud to the network edge in the proximity of end-users. Traditional centralized AI approaches need to collect data from end-users and save it centrally at edge servers to effectively train DL models for various applications. However, users are often unwilling to share their sensitive data with others due to the growing concern on data privacy, thus rendering these centralized approaches impractical in many cases. 

To address the aforementioned issue, federated learning (FL) was proposed to collaboratively train DL models in a distributed fashion without sensitive data leaving the user devices. In FL, models are trained locally at clients (i.e., user devices) and only the model parameters are uploaded by clients to the server. The existing FL works \cite{adaptiveFedOpt,convFedAvgNonIID, AdvancesinFL,liu2020accelerating,duan2020self} predominantly consider training supervised learning models (e.g., Convolutional Neural Networks and Long Short-Term Memory) for solving perception problems such as image classification and linguistic prediction. 

More recently, federated reinforcement learning (FRL) was proposed to extend FL to train reinforcement learning (RL) models for solving sequential decision-making problems in edge computing, such as resource allocation \cite{cui2021secure,yu2020deep}, content caching \cite{AWFDRL}, and user access control \cite{cao2021user}. Those studies directly combine model-free RL (learning without using a system dynamics model) with FL. Specifically, they train policies locally for all collaborating devices, using the model-free RL objective, and average the policy parameters on the server to generate a global policy for the next round of local training. However, traditional model-free RL algorithms generally have high sample complexity whilst obtaining samples is costly in many real-world edge computing scenarios such as smart factories and intelligent transport. For example, when applying RL methods to solve the task offloading problem in edge computing \cite{yu2020deep}, the immediate reward for an agent can only be obtained once the offloaded task is executed. Obtaining an effective offloading policy via model-free RL may require numerous trial-and-error steps where the agent interacts with the targeted edge computing system, resulting in huge costs. Besides, the theoretical properties (such as monotonic improvement) of these model-free FRL algorithms were not well understood. These issues hinder the practical use of model-free FRL in real-world edge computing scenarios. 

Compared to model-free methods, model-based RL \cite{janner2019trust,kurutach2018model,kaiser2020model} is much more sample efficient. Model-based RL learns an estimated dynamics model and then derives an optimal policy based on the learned model. Since the dynamics model is trained by using supervised learning, it can be naturally adapted to the current federated supervised learning setting where many state-of-the-art FL algorithms are available. In addition, when applying model-based FRL in edge computing, training of the RL policy can be offline (as the training process is based on interactions with the learned dynamics model), saving the huge costs of interacting directly with the edge computing system. 

Despite its promising benefits, there are several major challenges for effectively integrating model-based RL into FL. First, model bias (caused by overfitting in regions where insufficient data is available to train the model) is a key factor that affects model-based RL methods \cite{kurutach2018model}. Handling RL model bias in the federated setting is even more challenging due to the highly heterogeneous client data. Second, a rigorous theoretical analysis of federated RL is lacking. Especially, monotonic improvement of RL algorithms has not been proven to hold in the federated setting. Third, it is unclear how non-independent and identically distributed (non-IID) client data will affect the performance of federated model-based RL. 

In this paper, we extend model-based RL to the revolutionary FL paradigm, proposing a novel federated ensemble model-based reinforcement learning (FEMRL) algorithm. In FEMRL, the dynamics model is trained by FL, and then the RL policy is trained by solely using the dynamics model without interacting with the environment. To address the problem of model bias, we create an ensemble of dynamics models uploaded by clients. In addition, an ensemble distillation method is used to enhance the performance of model aggregation during FL. We summarise the key contributions of our work as follows:

\begin{itemize}
    \item To the best of our knowledge, this is the first of its kind that effectively extends model-based RL to the popular FL setting. In particular, we integrate FL and knowledge distillation techniques to create an ensemble of dynamics models for clients and then train the policy by solely using the ensemble without relying on the costly process of sampling data from the environment.

    \item We provide a rigorous theoretical analysis to prove that the monotonic improvement of FEMRL is guaranteed. The discrepancy bound of the return from the environment and the learned dynamics identifies and highlights the impacts of non-IID client data on the policy improvement for federated RL.

    \item We perform extensive experiments using four challenging continuous control environments \cite{Mujoco} under edge computing settings. The results demonstrate the superior sampling efficiency (hence lower computation and communication cost) of FEMRL compared to classic model-free FRL algorithms. The results also highlight the significant impacts of non-IID client data and local model update steps on the rate of reward improvement for federated RL, validating the insights obtained from our theoretical analysis.
\end{itemize}

The rest of the paper is organised as follows. Section \ref{related work} introduces the related work including federated learning in edge computing systems and model-based reinforcement learning. We next overview some necessary background knowledge related to FL and RL in Section \ref{Preliminaries}. Section \ref{methodology} presents details of the proposed FEMRL including the algorithm design and theoretical analysis. We then evaluate FEMRL with four standard RL environments and give the discussion about the experimental results in section \ref{sec::experiment-sec}. Finally, we summarise the paper in section \ref{conclusion}.

\begin{table*}[ht]
  \renewcommand\arraystretch{1.1}
  \renewcommand{\tabcolsep}{1.5mm}
  \caption{{\color{black}A summary of differences between the related work and our work.}}
  \label{summary_related_work}
  \centering
  \begin{tabular}{ p{5cm} | c | c | c | c }
    \hline
    \makecell[c]{\textbf{Related research topics}} & \textbf{References} & \textbf{Support federated training} & \textbf{Support decision-making} & \textbf{Sample efficiency} \\ 
    \hline
    \makecell[c]{Federated reinforcement learning} & \cite{FedRLFastPersonalisation,FedDeepRL,cui2021secure,AWFDRL,cao2021user} &  \checkmark &  \checkmark & low \\
    \hline
    \makecell[c]{Model-based reinforcement learning} & \cite{kurutach2018model,chua2018deep,zhang2019asynchronous,luo2018algorithmic,janner2019trust,sun2018dual,kidambi2020morel} &  $\times$ & \checkmark & high \\
    \hline
    \makecell[c]{Federated ensemble distillation} & \cite{chen2021fedbe,lin2020ensemble} &  \checkmark& $\times$ & high\\
    \hline
    \makecell[c]{Federated ensemble model-based \\ reinforcement learning} & our work&  \checkmark & \checkmark & high\\
    \hline
  \end{tabular}
\end{table*}

\section{Related Work}
\label{related work}
{\color{black} The related work focuses on extending RL algorithms to FL settings in edge computing systems, namely federated reinforcement learning. However, directly combining model-free RL with FL has low sample complexity. This work aims to improve the sample efficiency by adapting model-based RL to FL settings and further improve the training stability by utilizing federated ensemble distillation. In Table \ref{summary_related_work}, we summarize the related research topics of Federated Reinforcement Learning, Model-based Reinforcement Learning, and Federated Ensemble Distillation and present the detailed review in the following paragraphs. }

\textbf{Federated Reinforcement Learning:} Several previous studies have investigated training RL policies in the FL setting. Nadiger \textit{et al.} \cite{FedRLFastPersonalisation} proposed a system for training virtual Pong players (controlled via a Deep Q-network) in the FL setting to match the skill levels of (simulated) players. The authors in \cite{FedDeepRL} designed the FedRL system for training a policy, where individual FL clients do not have access to the full state-space of the RL task. Some researchers focus on domain-specific federated reinforcement learning in edge computing system. In \cite{cui2021secure}, the authors combined federated reinforcement learning and blockchain to solve resource allocation problem in edge computing system, providing reliable and secure training process. Wang \textit{et al.} \cite{AWFDRL} proposed an attention-weighted federated deep reinforcement learning model to solve the  heterogeneous collaborative edge caching problem by jointly optimising the node selection and cache replacement in device-to-device assisted mobile networks. In \cite{cao2021user}, the authors proposed an intelligent user access control scheme based on FRL in radio access networks to optimise the overall throughput and avoid frequent handovers. Whilef these works contribute to the development of model-free RL in the FL setting, they suffer from high sample complexity and lack theoretical guarantees. 
\\

\noindent \textbf{Model-based Reinforcement Learning:} RL algorithms are generally built on Markov Decision Processes (MDP) and can be divided into two categories: model-free RL algorithms, which directly train a value function or policy by trial-and-error in the environment; and model-based RL algorithms that explicitly learn a dynamics model based on the sampled data and derive a policy from the model. Model-based RL has been demonstrated to have significantly higher sample efficiency than model-free RL, and has been successfully applied to robotics \cite{zhang2019asynchronous}, video games \cite{kaiser2020model}, etc., using a variety of dynamics models including Gaussian processes \cite{deisenroth2011pilco}, linear models \cite{levine2014learning,tassa2012synthesis}, mixtures of Gaussians \cite{khansari2011learning}, and Deep Neural Networks (DNNs) \cite{depeweg2017learning,draeger1995model,nagabandi2018neural}. One key challenge for model-based RL is how to handle uncertainty of the dynamics model \cite{kurutach2018model,chua2018deep}. To address this challenge, ensembles of DNNs \cite{kurutach2018model,chua2018deep,zhang2019asynchronous} have been used to handle model uncertainty given data collected from the environment. In our FEMRL algorithm, we approximate the model dynamics using DNNs and create an ensemble using the models uploaded by FL clients. From the theoretical perspective, previous works \cite{luo2018algorithmic,janner2019trust,sun2018dual,kidambi2020morel} have provided general frameworks for analysing model-based RL, which include monotonic improvement guarantees. We extend the analyses of these works to our FEMRL algorithm, proving the monotonic improvement of FEMRL, which also demonstrates the influence of non-IID client data on the policy improvement.
\\

\noindent \textbf{Federated Ensemble Distillation:} FL aims to train a global model by sharing users' locally-trained models, rather than their private data. A crucial step in FL is how to aggregate local models into a global model. The seminal FedAvg algorithm \cite{fedavg} averages local models after each communication round to produce a new global model. However, directly averaging model parameters may not be the most effective method of creating the global model, due to non-IID client data, which is a significant challenge in FL and can come in many forms \cite{quagmire}. Some recent works focus on using ensemble distillation techniques to create more robust global models. \cite{chen2021fedbe} proposed a novel aggregation approach using Bayesian model ensembles and knowledge distillation. \cite{lin2020ensemble} proposed a similar algorithm for distillation on the server, using the average logits of the client models on an unlabelled dataset as the distillation target. Inspired by the above methods, we aggregate the client models into a single global model using knowledge distillation. Moreover, in our method, we sample fictional experience (as opposed to real experience) from the ensemble of models for knowledge distillation, further helps reduce the privacy risks of FEMRL.

\section{Preliminaries}
\label{Preliminaries}
In this section, we provide some necessary background about the formulations of FL and RL problems. 

\subsection{Federated Learning}
In FL, clients collaboratively train a model without exchanging their training data in any way. The FL objective is to find the minimiser $\bm{w}$ of the average client loss function $f$:
\begin{equation}
  \label{fl_objective}
  \underset{w \in \mathbb{R}^d}{\min} f(\bm{w}) = \frac{1}{K} \sum_{k = 1}^{K} p_k f_k(\bm{w}),
\end{equation}
where $K$ is the total number of clients, $p_k$ and $f_k$ are the fraction of total samples $( \sum_{k} p_k = 1 )$ and average loss over samples on client $k$, respectively. Therefore, FL aims to compute the minimiser of the average loss over all samples on all participating clients (i.e., the same objective as would be achieved by centralised training on pooled data). However, in real-world FL data is non-IID across clients, as the behaviour of each client influences how its local samples are generated. Non-IID client data has been extensively shown to hinder the convergence of the FL model, and is one of the key challenges to FL. In our FEMRL algorithm, we use FL to train the dynamics model of the MDP.

\subsection{Reinforcement Learning} 

A sequential decision-making problem solved by RL is generally modelled as an MDP, which is given by the six-tuple $\mathcal{M} := (\mathcal{S}, \mathcal{A}, T, R, \rho_0, \gamma)$. Here, $\mathcal{S}$ and $\mathcal{A}$ are the state and action spaces, respectively. $T(s'|s,a)$ represents the dynamics that specifies the conditional distribution of the next state $s'$ given the current state $s$ and action $a$. $R(s, a)$ is the reward function, $\rho_0$ represents the initial state distribution, and $\gamma \in (0, 1)$ denotes the discount-factor. Denote $\pi(\cdot|s)$ as the policy that specifies the conditional distribution over action space given a state $s$. The goal of RL algorithms is to find the optimal policy that maximises the expected discounted return defined by $\mathbb{E}_{\pi, T, \rho_0}\left [ \sum_{t=0}^{\infty} \gamma^t R(S_t, A_t) \right ]$. Define the value function following policy $\pi$ with MDP $\mathcal{M} := (\mathcal{S}, \mathcal{A}, T, R, \rho_0, \gamma)$ as:
\begin{equation}
  \label{value_function} 
  V_{\pi}^{\mathcal{M}}(s) = \mathop{\mathbb{E}}\limits_{ \tiny \begin{array}{c} S_{t+1} \sim T(\cdot | S_t, A_t) \\ A_t \sim \pi(\cdot | S_t) \end{array} } \left [\sum_{t=0}^{\infty} \gamma^t R(S_{t}, A_{t}) \bigg\rvert S_0 = s \right ].
\end{equation}
Thus $V_{\pi}^{\mathcal{M}} := V_{\pi}^{\mathcal{M}}(s_0)$ is the total return given policy $\pi$, where $s_0 \sim \rho_0$ is the initial state.

\section{Federated Ensemble Model-based Reinforcement Learning (FEMRL)}
\label{methodology}
\begin{figure*}[t]
  \centering
  \includegraphics[width=6.0in]{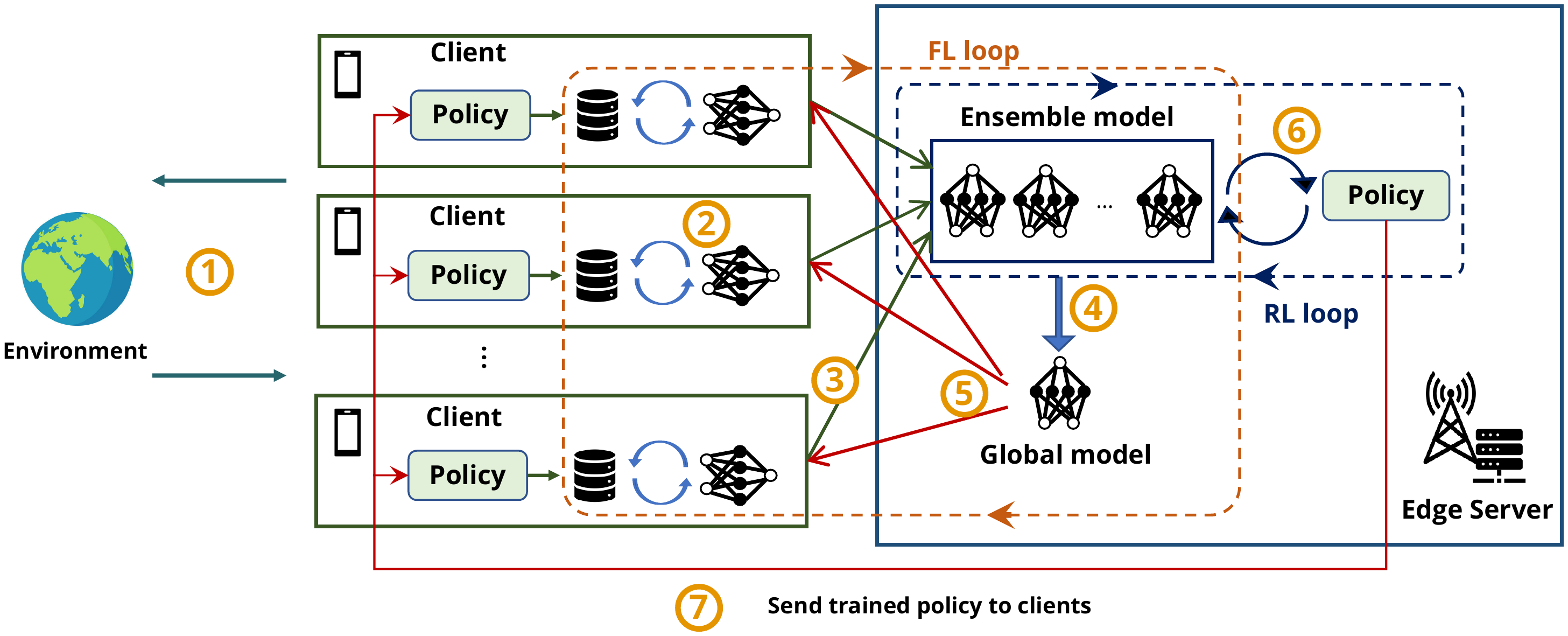}
  \centering\caption{Overview of the FEMRL algorithm. \textbf{Step 1:} each client samples data from the environment based on the local sample policy and stores the data locally. \textbf{Step 2:} local dynamics models are trained based on the sampled data. \textbf{Step 3:} the parameters of the local dynamics models are sent to the server. \textbf{Step 4:} an ensemble of dynamics models are created on the server using the uploaded client models, and a single global model is then created via knowledge distillation. \textbf{Step 5:} the parameters of the global model are sent to clients. Then, starting step 2 again for $T_{\rm c}$ rounds of FL loops. \textbf{Step 6:} after rounds of FL training, the server then trains the policy using a policy-gradient algorithm (e.g., TRPO) and the ensemble of dynamics models. \textbf{Step 7:} the parameters of the new policy are sent to clients for the next round of sampling (i.e., Step 1). }
  \label{fedmbdrl_architecture}
\end{figure*}
In this section, we describe the proposed FEMRL algorithm in detail, and then provide a theoretical analysis guaranteeing monotonic improvement of the policy produced by FEMRL. 

\subsection{Algorithm Design}
Our algorithm intends to train a model-based RL policy in an edge computing environment involving multiple client devices, and a corresponding edge server. {\color{black} In our setting, all participating clients share the same environment with different state transitions}. There are many real-world applications corresponding to this setting, including unmanned aerial vehicles \cite{mowla2020afrl}, edge caching \cite{wang2020federated}, user access control \cite{cao2021user}, and resource management \cite{yu2020deep,cui2021secure} in edge computing systems. Fig. \ref{fedmbdrl_architecture} illustrates the operation of FEMRL, which consists of two major sub-components: 
FL loop for the training of dynamics model, and RL loop for policy training. 

Formally, define the MDP with learned dynamics $\widehat{T}(s'|s, a; \bm{w})$ as $\widehat{\mathcal{M}} := (\mathcal{S}, \mathcal{A}, \widehat{T}, R, \rho_0, \gamma)$, where $\bm{w}$ are the parameters of the learned model. Define $\widehat{T}(s, a; \bm{w})$ as the function that produces the unique value of $s'$. The goal of the FL loop is to learn the optimal $\bm{w}$ such that the discrepancy between the learned dynamics and real dynamics is minimal. This minimisation is a typical supervised learning process, which can be solved through maximum likelihood estimation or other techniques from generative and dynamics modelling. In this paper, we apply a multi-step prediction loss that is similar to \cite{luo2018algorithmic} for model learning, and use a predefined reward function, as in the works \cite{kurutach2018model,chua2018deep,luo2018algorithmic}. Concretely,s for a state $s_t$ and action sequence $a_{t:t+h}$, the $h$-step prediction $\hat{s}_{t+h}$ as $\hat{s}_t = s_t$, and for $h \ge 0$, $\hat{s}_{t+h+1} = \widehat{T}(\hat{s}_{t+h}, a_{t+h}; \bm{w})$, the {\color{black}$H$-step} loss is defined as:
\begin{equation}
\label{loss_func}
    f(\bm{w}) = \frac{1}{H} \sum_{i=1}^H \left \| (\hat{s}_{t+i} - \hat{s}_{t+i-1}) - (s_{t+i} - s_{t+i-1}) \right \|_2.
\end{equation}

{\color{black}The FL loop involves $T_{\rm c}$ rounds of communication between client devices and the edge server. Within each round of federated training, each client parallelly conducts the local update procedure as shown in Algorithm \ref{client-side}. The client first samples trajectories from the environment using the current policy $\pi_{D} \leftarrow \pi_{\theta}$, where $\pi_{\theta}$ is the updated policy received from the server. The client then collects all the sampled trajectories into the local replay buffer, $D_k$. Note that the distribution of sampling trajectories is determined by the values of the policy parameters $\theta$ and the dynamics of the environment $T(s'|s,a)$ as:  
\begin{equation}
	\label{dist_sample_trajectories}
	P(s_0,a_0,s_1,..., s_{n},a_{n},s_{n+1}) = \rho_0 \prod_{t=0}^{n} \pi_{\theta}(a_t|s_t)T(s_{t+1}|s_t, a_t).
\end{equation}
Next, the client conducts $E$ local update steps to train the local dynamics model with mini-batch gradient descent. The returned local dynamics model is then uploaded to the server for further processing.} All uploaded models are then aggregated into a single global model on the server-side. Instead of simply averaging the local models as in FedAvg \cite{fedavg}, we create an ensemble model $\{ \bm{w}_{k} \}_{k=1}^{m}$ based on the uploaded local models, where $\bm{w}_k$ is the local model updated by the $k$\textsuperscript{th} client. This ensemble serves two purposes: 1) creating a single global dynamics model that benefits from knowledge distillation; 2) generating fictitious data for policy training. Using the model ensemble, therefore, benefits both the FL and policy training processes by producing a robust aggregate model and alleviating the model bias problem in policy training. {\color{black} In our proposed FEMRL, the policy is trained through interacting with the learned dynamics model rather than the actual environment. Therefore, the model error has a significant impact on the learned policy. To reduce the impact of the model error, the ensemble method provides an effective regularization for policy training: by using the ensemble dynamics model, the policy is able to perform well over many possible alternative futures, making the learned policy more robust.}

{\color{black} The ensemble knowledge distillation method involves a typical student-teacher learning scheme. Denote the sampled fictitious data as $\mathcal{D} = \{s_0, a_0, ...., s_n, a_n\}$, $s_0 \sim \rho_0$, $a_t \sim \pi(a_t|s_t)$, $s_{t+1} = \widehat{T}(s_t, a_t; \{ \bm{w}_{k} \}_{k=1}^{m})$. The student model (i.e., the single global dynamics model) is trained with Adam \cite{adam} following the loss function:
\begin{equation}
    \label{distillation_loss}
    L(\overline{\bm{w}}) = \left \| \frac{1}{m}\sum_{k=0}^mT(s_t, a_t; \bm{w}_k) - T(s_t, a_t; \overline{\bm{w}}) \right \|_2,
\end{equation}
where $T(s_t, a_t; \bm{w}_k)$ is the learned local dynamics of client $k$ and $T(s_t, a_t; \overline{\bm{w}})$ is the global dynamics represented by the student model.
}

After $T_{\rm c}$ rounds of federated training, we then use a policy-gradient algorithm (Trust Region Policy Optimization (TRPO) \cite{schulman2015trust}) to train the policy by interacting with the ensemble of models. Next, the parameters of the updated policy are sent to all participating clients, which will then start the next round of sampling procedure using the updated policy. We adopt asynchronous model aggregation where the server does not wait for all clients to finish sending their updated local models. {\color{black} At each training round, only a fraction, $\alpha$ (i.e., policy synchronisation rate), of clients update their policy using the newest global policy. This design is practical since clients can be unreliable edge devices that may not always be able to reach the server (e.g., a smartphone loses its network connection) in the FL scenario. For $\alpha < 1$, clients' data distributions become non-IID, as some clients will be performing local updates on the environment model using a ‘stale’ (unsynchronised) policy. } We present the detailed server-side algorithm of FEMRL in Algorithm \ref{server-side procedure}. Specifically, we conduct training with $n_{\rm outer}$ epochs. Each epoch involves $n_{\rm inner}$ rounds of inner loops. Within each inner loop, we alternatively conduct $T_{\rm c}$ rounds of FL loops and $G$ rounds of RL loops.

\begin{algorithm}[t]
	{\color{black}
  \caption{FEMRL running on $K$ clients (indexed by $k$) for $E$ epochs,  each consisting of $T_{\rm c}$ rounds of federated communication and $G$ steps of policy update.  }
  \label{server-side procedure}
  \SetKwProg{modeltraining}{Procedure}{}{}
  \SetKwProg{policytraining}{Procedure}{}{}
  \SetKwProg{generatefictitiousData}{Procedure}{}{}

  \modeltraining{\rm FEMRL}{
    \For{$n_{\rm outer}$ \text{\rm epochs} }{
        \For{$n_{\rm inner}$ \text{\rm iterations} } {
            $\{\bm{w}^{(k)}\}_{k=1}^K \leftarrow$ \text{FedEnLearning}($T_{\rm c}$) \\
            \For{$G$ \text{\rm iterations}} {
              Generate fictitious samples $\mathcal{D} \leftarrow \text{\textit{GenerateFictitiousData}}(\{\bm{w}_k\}_{k=1}^K, \pi_\theta)$. \\
              Update policy $\pi_\theta$ using TRPO and $\mathcal{D}$
              }
        }

         Send the updated policy $\pi_{\theta}$ to clients with synchronisation rate $\alpha$.
      }
  }
\ \ \ \\
  \modeltraining{\rm FedEnLearning $(T_{\rm c})$}{
  	Initialise parameters of the student model $\overline{\bm{w}}$ \\
    \For{$T_{\rm c}$ \text{\rm iterations}}{
        \For{ \text{\rm each client} $k \in K$ {\bf{in parallel} } } {
            $\triangleright$ \textit{ LocalUpdate is detailed in Algorithm \ref{client-side}} \\
            $\triangleright$ \textit{ At each local update round, the student model $\overline{\bm{w}}$ works as initial model of all participated clients.}\\
            $\bm{w}_k \gets$ $\text{\textit{LocalUpdate}}(k, \overline{\bm{w}}, E)$ 
        }

        Create ensemble of models $\{\bm{w}_k\}_{k=1}^K$ \\
        \For{$N$ \text{\rm iterations}} {
            Generate fictitious samples $\mathcal{D} \leftarrow \text{\textit{GenerateFictitiousData}}(\{\bm{w}_k\}_{k=1}^K, \pi_\theta)$. \\
            $\triangleright$ \textit{ The updated student model $\overline{\bm{w}}$ is then used by LocalUpdate procedure for next-round of local training.}\\
            Update the student model  $\overline{\bm{w}}$ using loss function from Eq. (\ref{distillation_loss}) on $\mathcal{D}$.
        }
        } 
        \textbf{return} $\{\bm{w}_k\}_{k=1}^K$
  }
  \ \ \ \\
  \generatefictitiousData{\rm GenerateFictitiousData $(\{\bm{w}_k\}_{k=1}^K, \pi_\theta)$}{
    Sample initial state $s_0$ from the initial state distribution $s_0 \sim \rho_0$\\
    \For{$t \gets 0$ \rm{\textbf{to}} $N$} {
        Sample $a_t \sim \pi_\theta(a_t|s_t)$ from policy $\pi_\theta$ \\
        Randomly sample a dynmics model $\bm{w}^{(k)}$ from the ensemlbe of models $\{\bm{w}_k \}_{k=1}^K$ \\
        Using the dynamics model $\bm{w}_k$ to predict the next state $s_{t+1} \sim \widehat{T}(s_{t+1}|s_t, a_t; \bm{w}_k)$ \\
        Get reward $r_t$ by the reward function $r_t = R(s_t, a_t)$ \\
        Add the transition to fictitious dataset $\mathcal{D} \bigcup \{s_t, a_t, r_t, s_{t+1}\}$
    }
     \textbf{return} $\mathcal{D}$
  }
}
\end{algorithm}

\begin{algorithm}[t]
  { \color{black} 
  \caption{Procedures of client side}
  \label{client-side}
  
  \SetKwProg{localupdate}{Procedure}{}{ }
  \localupdate{$\text{\rm LocalUpdate} (k, \omega_{k}^{0}, E)$}{
    Sample initial state $s_0$ from the initial state distribution $s_0 \sim \rho_0$. \\
    \For{$t \gets 0$ \textbf{to} $N$} {
    	Sample $a_t \sim \pi_{D}(a_t|s_t)$ with current policy $\pi_D \leftarrow \pi_\theta$. \\
    	Apply $a_t$ to the environment and get the next state $s_{t+1}$ and reward $r_t$. \\
    	Store the transition to the local replay buffer $D_k \gets D_k \cup (s_t, a_t, r_t, s_{t+1})$.
    	}
    \For{$i \gets 1$ \textbf{to} $E$} {
      Random sample a batch of training data $\xi_i$ from $D_k$ \\
      Conduct mini-batch gradient descent: $ \omega_{k}^{i} \gets \omega_{k}^{i-1} - \eta_i \nabla f_k(\omega_{k}^{i-1}; \xi_i)$
    }
    
    \KwRet $\omega_{k}^{E}$
}
  }
\end{algorithm}

{\color{black} It is noteworthy that the model-free RL methods can also be integrated into the framework as follows. First, the server receives locally trained policy networks from clients and creates an ensemble of policy networks. Next, a single global policy network is created via knowledge distillation. Finally, the parameters of the global policy network are sent to clients, starting next-round local training. In the following sections, we provide a theoretical guarantee of monotonic policy improvement for FEMRL, before performing a thorough empirical evaluation of the algorithm. }

\subsection{Theoretical analysis}
Proving monotonic improvement guarantee is an important aspect of RL algorithms. In this section, we provide the conditions under which FEMRL is guaranteed to provide monotonic improvement for $\pi$. To prove monotonic improvement of a model-based RL algorithm, we wish to find a lower bound of $V_{\pi}^{\mathcal{M}}$:
\begin{equation}
    \label{target_func}
    V_{\pi}^{\mathcal{M}} \ge V_{\pi}^{\widehat{\mathcal{M}}} - B
\end{equation}
where $B$ is the bounded value. 

Since the model is trained with supervised learning, the distance between the true model and the learned model can be quantified by standard Probably Approximately Correct (PAC) generalization error \cite{shalev2014understanding}. PAC bounds the difference in generalisation and empirical error by a constant with high probability. In FEMRL, this generalisation error can be defined as the distance between the learned dynamics and the environment dynamics. The recent literature provides two main ways to measure this distance, each with different assumptions. One assumes that the dynamics model is a complex probability distribution, and measures the distance using Total Variation Distance (TVD)  \cite{janner2019trust}. The other assumes deterministic dynamics and directly uses 1-Wasserstein distance \cite{luo2018algorithmic}. In addition, \cite{yu2020MOPO} uses a general measurement, Integral Probability Metric, where TVD and 1-Wasserstein distance are two special cases. Since TVD requires weaker assumptions and is typically more practical than 1-Wasserstein distance, we use TVD in our analysis. Overall, we make the following assumptions:
\\ 

\noindent \textbf{Assumption 1.} The generalisation error is measured by the TVD, defined as $\epsilon_m := D_{\rm TV}(\widehat{T}(\cdot | s,a)|T(\cdot | s,a)) = \frac{1}{2} \sum_{s'} \left | \widehat{T}(s' | s,a) - T(s' | s,a) \right |$
\\

\noindent \textbf{Assumption 2.} The dependency of two policies $\pi$ and $\pi_{D}$ is measured by the TVD $\epsilon_\pi = D_{\rm TV}(\pi(a|s)|\pi_{\rm D}(a|s))$, and is bounded by a constant $\delta_{\pi}$, where $D_{\rm TV}(\pi(a|s)|\pi_{ D}(a|s)) \leq \delta_{\pi}$.
\\

\noindent \textbf{Assumption 3.} The reward function of the MDP is bounded: $\forall s \in \mathcal{S}, \forall a \in \mathcal{A}, R(s,a) \leq r_{\rm max}$.
\\

\noindent \textbf{Assumption 4.} The loss function of the FL dynamics model is convex and bounded by $L$, $|f(\bm{w})| \leq L$, $\forall \bm{w}$.
\\

\noindent Based on previous works \cite{janner2019trust,luo2018algorithmic,yu2020MOPO}, we have the following Lemma to build the lower bound of the discrepancy of the total returns from the true model and the learned model in conventional model-based RL:

\begin{lemma}
    \label{lemma1}
    Denote $\epsilon_{m}$ as the generalization error of the dynamics model and $\epsilon_{m}^{\rm max}$ as the maximal value of $\epsilon_{m}$. Denote $\epsilon_{\pi}$ as the discrepancy between target policy  $\pi$ and sample policy $\pi_{D}$. For any policy $\pi$, the  return of the environment $V_{\pi}^{\mathcal{M}}$ and the return of the learned dynamics $V_{\pi}^{\widehat{\mathcal{M}}}$ are bounded as:
    \begin{equation}
        V_{\pi}^{\mathcal{M}} \geq V_{\pi}^{\widehat{\mathcal{M}}} - \underbrace{\left [ \frac{2 \gamma r_{\rm max }}{1-\gamma} \epsilon_{m} + \frac{4 \gamma^2 r_{\rm max }}{(1-\gamma)^3} \epsilon_{\pi} \epsilon_{m}^{\rm max} \right ]}_{B}.
    \end{equation}
\end{lemma}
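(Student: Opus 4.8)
The plan is to bound the discrepancy $V_{\pi}^{\mathcal{M}} - V_{\pi}^{\widehat{\mathcal{M}}}$ by decomposing it into two telescoping pieces: one that captures the error from the dynamics mismatch $\epsilon_m$ alone (holding the sampling distribution fixed), and one that captures the additional error incurred because the policy $\pi$ used for evaluation differs from the sampling policy $\pi_D$ under which $\epsilon_m$ is controlled. The standard device is the simulation lemma / telescoping identity: write the difference of returns as a sum over time steps of the one-step Bellman backup error, where at each step we have already rolled out $t$ steps under one model and then switch to the other. Concretely, I would introduce the state-action value function $Q_{\pi}^{\widehat{\mathcal{M}}}$ and write
\begin{equation}
  V_{\pi}^{\mathcal{M}} - V_{\pi}^{\widehat{\mathcal{M}}} = \frac{\gamma}{1-\gamma} \mathop{\mathbb{E}}\limits_{(s,a) \sim d_{\pi}^{\mathcal{M}}} \left[ \left( \mathbb{E}_{s' \sim T(\cdot|s,a)} - \mathbb{E}_{s' \sim \widehat{T}(\cdot|s,a)} \right) V_{\pi}^{\widehat{\mathcal{M}}}(s') \right],
\end{equation}
where $d_{\pi}^{\mathcal{M}}$ is the discounted state-action occupancy measure. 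Then $|V_{\pi}^{\widehat{\mathcal{M}}}(s')| \le r_{\max}/(1-\gamma)$ by Assumption 3, and the inner difference of expectations is bounded by $2 D_{\rm TV}(\widehat{T}(\cdot|s,a)\,\|\,T(\cdot|s,a)) \le 2\epsilon_m$ by the variational characterisation of TVD (pairing $\tfrac12\sum_{s'}|\cdot|$ against a function bounded in absolute value). This already yields the first term $\frac{2\gamma r_{\max}}{1-\gamma}\epsilon_m$.

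The second term arises because the generalisation error $\epsilon_m$ is only guaranteed on states visited by the sampling policy $\pi_D$, not by $\pi$, so $\epsilon_m$ in the expectation above should really be interpreted as $\epsilon_m$ under $d_{\pi_D}$ plus a correction. I would therefore replace the occupancy measure $d_{\pi}^{\mathcal{M}}$ by $d_{\pi_D}^{\mathcal{M}}$ and bound the substitution error via a second telescoping argument over how $\pi$ and $\pi_D$ differ; each step contributes $D_{\rm TV}(\pi\|\pi_D) \le \epsilon_{\pi}$, and the total weight of occupancy-measure shift over the horizon contributes another factor of $\frac{\gamma}{(1-\gamma)}$ (the standard occupancy-measure perturbation bound, cf. the $\frac{1}{(1-\gamma)^2}$-type factors in Janner et al. and Luo et al.). Multiplying the per-step dynamics error $\epsilon_m^{\max}$, the value bound $r_{\max}/(1-\gamma)$, and the policy-shift weight gives a term of order $\frac{\gamma^2 r_{\max}}{(1-\gamma)^3}\epsilon_{\pi}\epsilon_m^{\max}$; tracking the constant $4$ is a matter of being careful with the two $\frac{1}{1-\gamma}$ factors from the two telescoping sums and the factor $2$ from TVD. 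Combining the two contributions and noting that $V_{\pi}^{\mathcal{M}} - V_{\pi}^{\widehat{\mathcal{M}}} \ge -(\text{sum of the two bounds})$ gives the claimed inequality.

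The main obstacle is the bookkeeping in the second term: one has to be precise about the distinction between $\epsilon_m$ (evaluated along the sampling distribution) and $\epsilon_m^{\max}$ (the worst-case dynamics error, which is what survives once the occupancy measure has been shifted off the sampling distribution), and about exactly which pair of $(1-\gamma)^{-1}$ factors the policy-shift argument produces — one from converting a per-step error into a return-level error, and one from the occupancy-measure perturbation itself. I expect this to follow the template of Lemma B.3/Theorem 4.1 in Janner et al. (2019) and the analogous lemmas in Luo et al. (2018) and Yu et al. (2020, MOPO), essentially verbatim, since the federated aspect does not enter at this stage — it only enters later through how $\epsilon_m$ and $\epsilon_m^{\max}$ themselves are controlled via the per-client PAC bounds and Assumption 4. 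So I would cite those works for the decomposition and spend the effort on verifying that the constants match the stated $\frac{2\gamma r_{\max}}{1-\gamma}$ and $\frac{4\gamma^2 r_{\max}}{(1-\gamma)^3}$.
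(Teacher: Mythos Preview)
Your proposal is correct and mirrors the paper's proof almost exactly: the paper also derives the simulation-lemma identity (phrased via a telescoping sequence $W_j$ that rolls out $j$ steps under $T$ and the rest under $\widehat{T}$, yielding $V_{\pi}^{\widehat{\mathcal{M}}} - V_{\pi}^{\mathcal{M}} = \gamma\,\mathbb{E}_{s\sim\rho_\pi^{\mathcal{M}},\,a\sim\pi}[G_{\widehat{\mathcal{M}}}^\pi(s,a)]$), bounds the inner term by $\tfrac{2r_{\max}}{1-\gamma}D_{\rm TV}(T\|\widehat{T})$, and then shifts the occupancy from $\rho_\pi^{\mathcal{M}}$ to $\rho_{\pi_D}^{\mathcal{M}}$ via an importance-sampling inequality together with the standard bound $\|\rho_\pi^{\mathcal{M}} - \rho_{\pi_D}^{\mathcal{M}}\|_1 \le \tfrac{2\gamma}{(1-\gamma)^2}\epsilon_\pi$, which is precisely where your $\epsilon_m^{\max}$ and the constant $4$ emerge. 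The only cosmetic difference is normalization: the paper works with the unnormalized visitation measure $\rho_\pi^{\mathcal{M}}$ (so its $\epsilon_m$ already absorbs one $1/(1-\gamma)$), whereas you write the identity with the normalized $d_\pi^{\mathcal{M}}$ and an explicit $\gamma/(1-\gamma)$ prefactor---these are equivalent once you match conventions.
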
 

\begin{proof}
    See Appendix \ref{appendix:lemma1s}. 
\end{proof}

\noindent Lemma \ref{lemma1} gives a theoretical guarantee for the monotonic improvement of the model-based RL algorithm. As long as we improve the returns under the learned model by more than $B$, we can guarantee improvement under the environment \cite{janner2019trust}. The bound $B$ is proportional to the generalization error of the dynamics model, $\epsilon_m$, and the discrepancy between the sample policy and target policy, $\epsilon_\pi$. However, Lemma \ref{lemma1} holds only if the generalization error $\epsilon_{m}$ is bounded.  Conventional model-based RL methods use normal centralised supervised learning to train the dynamics model, however, in FEMRL we use FL to train the dynamics model through an ensemble of models created from the clients' local models to approximate the learned model, $\widehat{T}(s'|s, a; \{\bm{w}^{(k)}\}_{k=1}^K)$. Therefore, it is necessary to investigate if $\epsilon_{m}$ is bounded in the FL setting and what factors influence $\epsilon_{m}$ in FEMRL. 

We now derive a bound on the generalisation error of the ensemble of client models.
 
\begin{theorem}
\label{theorem1}
Denote the global data distribution as $D$. Let $D_k$ be the local data distribution of client $k$. {\color{black}Let $\pi_{D}^k$ be the sample policy for client $k$. Let $\overline{\pi}_{D}$ be the virtual global sample policy.} Therefore, we have $D = \mathbb{P}_{s,a,s'} = \sum_{s,a} T(s'|s,a) \overline{\pi}_{D}(a|s)$ and $D_k = \mathbb{P}_{s,a,s'} = \sum_{s,a} T(s'|s,a) \pi_{D}^k(a|s)$. Denote $S_k \sim D_{k}^{m}$ as local empirical distribution for client $k$. Let $\hat{S}$ be the global empirical distribution, each local empirical distribution has equal contribution to the global distribution, thus $\hat{S} = \frac{1}{K}\sum_{k=1}^{K} S_{k}$. Let $\mathcal{H}$ be a hypothesis class with limited Vapnik–Chervonenkis (VC) dimension, $VCdim(\mathcal{H}) \leq d < \infty$. The hypothesis $h\in \mathcal{H}$ learned on $S_k$ and $\hat{S}_k$ is denoted by $h_{S_k}$ and $\hat{h}_{S_k}$, respectively. Then, the generalisation error of the ensemble model is bounded with probability at least $1-\delta$:
\begin{equation}
\label{theory_1}
\begin{aligned}
 \epsilon_{m} &:=  \epsilon_{D}(\frac{1}{K} \sum_{k}{h_{S_k}} ) \\
              &\leq \epsilon_{\hat{S}_k}(h_{\hat{S}_k}) + C \sqrt{\frac{d + log(1/\delta)}{m}} + \frac{L}{K} \Gamma,
\end{aligned}
\end{equation}
where $C$ and $L$ are constants, $m$ is the number of training samples per local data distribution, and {\color{black}$\Gamma = \sum_{k=1}^K  D_{\rm TV}(\overline{\pi}_{D} ||\pi_{D}^{k})$ which is affected by the sample policies.}

\end{theorem}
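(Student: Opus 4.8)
The plan is to decompose $\epsilon_m$ into three pieces --- an empirical-risk term on the pooled sample, a uniform-convergence (VC) term, and a distribution-mismatch term induced by the non-IID sample policies --- and bound each separately. The first move is to remove the ensemble average from the risk. Since the generalisation error is an expectation of the (per-sample) dynamics loss, and by Assumption~4 this loss is convex and bounded by $L$, Jensen's inequality gives $\epsilon_{D}\!\left(\frac{1}{K}\sum_k h_{S_k}\right) \le \frac{1}{K}\sum_k \epsilon_{D}(h_{S_k})$. Thus it suffices to control the true \emph{global} risk of each individually trained client hypothesis $h_{S_k}$ and then average.

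For a fixed client $k$, I would insert the local population risk $\epsilon_{D_k}$ via the triangle inequality: $\epsilon_{D}(h_{S_k}) \le \big|\epsilon_{D}(h_{S_k}) - \epsilon_{D_k}(h_{S_k})\big| + \epsilon_{D_k}(h_{S_k})$. The first term is a change-of-measure gap, and because the loss is bounded by $L$ it satisfies $|\epsilon_{D}(h) - \epsilon_{D_k}(h)| \le L\, D_{\rm TV}(D\,\|\,D_k)$ for every $h \in \mathcal{H}$. Now I invoke the structure supplied in the statement: $D$ and $D_k$ are obtained by pushing the virtual global sample policy $\overline{\pi}_{D}$ and the client sample policy $\pi_{D}^{k}$, respectively, through the \emph{same} transition kernel $T(s'|s,a)$. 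By the data-processing inequality for total variation, $D_{\rm TV}(D\,\|\,D_k) \le D_{\rm TV}(\overline{\pi}_{D}\,\|\,\pi_{D}^{k})$. Averaging this over $k$ produces exactly the term $\frac{L}{K}\Gamma = \frac{L}{K}\sum_{k=1}^{K} D_{\rm TV}(\overline{\pi}_{D}\,\|\,\pi_{D}^{k})$ in the claimed bound; this is where the non-IID-ness of the clients enters.

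It remains to handle the local population risks $\epsilon_{D_k}(h_{S_k})$. Here I would apply the classical VC-dimension generalisation bound: since $VCdim(\mathcal{H}) \le d$ and $S_k$ consists of $m$ i.i.d.\ samples from $D_k$, with probability at least $1-\delta/K$ we have $\sup_{h\in\mathcal{H}}|\epsilon_{D_k}(h) - \epsilon_{S_k}(h)| \le C\sqrt{(d + \log(1/\delta))/m}$, after absorbing the union-bound factor $\log K$ into the constant $C$ and the $\log(1/\delta)$ notation; a union bound over the $K$ clients makes this hold simultaneously with probability at least $1-\delta$. Finally, because each $h_{S_k}$ is the empirical risk minimiser on $S_k$, we have $\epsilon_{S_k}(h_{S_k}) \le \epsilon_{S_k}(h_{\hat{S}})$ for the hypothesis $h_{\hat S}$ trained on the pooled sample $\hat S = \frac{1}{K}\sum_k S_k$; averaging over $k$ turns $\frac{1}{K}\sum_k \epsilon_{S_k}(h_{\hat S})$ into the pooled empirical risk $\epsilon_{\hat S}(h_{\hat S})$. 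Chaining the three bounds and taking the average over $k$ yields the stated inequality.

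The main obstacle I anticipate is not any single estimate but the bookkeeping that ties them together consistently: the VC bound must hold uniformly over $\mathcal{H}$ and simultaneously across all $K$ clients (so the confidence budget and the $\log K$ overhead have to be folded into $C$ and $\delta$), the Jensen and ERM-comparison steps must be compatible with exactly the convexity/boundedness granted by Assumption~4, and the change-of-measure step must be stated for arbitrary $h\in\mathcal H$ (not just the trained one) so it survives the union bound. The conceptually delicate link is the data-processing step converting the abstract distance $D_{\rm TV}(D\,\|\,D_k)$ into the policy discrepancy $D_{\rm TV}(\overline{\pi}_{D}\,\|\,\pi_{D}^{k})$, which crucially relies on all clients sharing the same environment dynamics.
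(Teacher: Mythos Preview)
Your proposal is correct and follows essentially the same route as the paper: Jensen via convexity of the loss to pull the average outside $\epsilon_D$, the change-of-measure bound $\epsilon_D(h)\le \epsilon_{D_k}(h)+L\|D-D_k\|_1$, the uniform VC bound to pass from $\epsilon_{D_k}$ to $\epsilon_{S_k}$, and the ERM comparison $\epsilon_{S_k}(h_{S_k})\le \epsilon_{S_k}(h_{\hat S})$ averaged into $\epsilon_{\hat S}(h_{\hat S})$. The only cosmetic differences are that the paper computes $\|D-D_k\|_1$ directly (obtaining equality with the policy discrepancy rather than your data-processing $\le$) and does not spell out the union bound over the $K$ clients that you correctly flag as needing to be absorbed into $C$.
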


\begin{proof}
    See Appendix \ref{appendix:gen}. 
\end{proof}

\noindent Theorem \ref{theorem1} shows the generalisation error is bounded, thus the monotonic improvement (i.e., Lemma \ref{lemma1}) still holds for FEMRL. There are three key factors affecting the maximal value of generalisation error $\epsilon_m$: the virtual global empirical error $\epsilon_{\hat{S}_k}(h_{\hat{S}_k})$, the number of training samples $m$, and the sum of TVDs between the clients' sample policies and the virtual global sample policy, $\Gamma$. 

Note that, The virtual global empirical error can in principle be estimated and optimised approximately by the training loss. $\Gamma = \sum_{k} D_{\rm TV}(\overline{\pi}_{D} ||\pi_{D}^{k}) = \sum_{k}||D - D_{k}||_1$ can be a measurement of the degree of non-IID of clients' datasets. When the data distribution is IID on all clients, $||D - D_{k}||_1 = 0$, $D_{\rm TV}(\overline{\pi}_{D} ||\pi_{D}^{k})=0$, $\forall k$, which means all clients share the same sample policy. When the data distribution of clients becomes heterogeneous, $\Gamma >0$. Specifically, the higher degree of non-IID of data distribution, the higher $\Gamma$ is.

We now analyse the effect of policy synchronisation rate $\alpha$ on the measure of non-IID client data distributions, $\Gamma$. Denote the sample policy before and after the global update as $\pi_D$ and $\pi'_D$, respectively. After policy synchronisation (with rate $\alpha$),  $\alpha K$ clients have the latest sample policy $\pi'_D$ and $(1-\alpha) K$ clients use the old sample policy $\pi_{D}$. Therefore, the virtual global sample policy is given as:
\begin{equation}
\label{new_avg_pi}
    \overline{\pi}_D = \frac{1}{K} \left [ \sum_{k=1}^{\alpha K} \pi'_{D} + \sum_{k=1}^{(1-\alpha)K}\pi_{D} \right ] = \alpha \pi'_{D} + (1-\alpha) \pi_{D}.
\end{equation}
Using the the definition of $\Gamma$:
\begin{equation}
\label{gamma_non_iid}
\begin{aligned}
 \Gamma &:= \sum_{k=1}^K  D_{\rm TV}(\overline{\pi}_{D} ||\pi_{D}^{k}) \\ 
 &= \sum_{k=1}^{\alpha K} D_{\rm TV}(\overline{\pi}_D||\pi'_D) + \sum_{k=1}^{(1-\alpha) K} D_{\rm TV}(\overline{\pi}_D||\pi_D).
\end{aligned}
\end{equation}
Replacing $\overline{\pi}$ using Eq. (\ref{new_avg_pi}), we have for the synchronised component:
\begin{equation}
\label{first_term_iid}
\begin{aligned}
     D_{\rm TV}(\overline{\pi}_{D} ||\pi'_{D}) &= \frac{1}{2} \sum_{s,a} \left |  \alpha \pi'_{D} + (1-\alpha) \pi_{D} - \pi'_{D} \right | \\
        &= \frac{1}{2} (1-\alpha) D_{\rm TV}(\pi_{D}||\pi'_{D}).
\end{aligned}
\end{equation}
Similarly, for the unsynchronised component:
\begin{equation}
\label{second_term_iid}
\begin{aligned}
     D_{\rm TV}(\overline{\pi}_{D} ||\pi_{D}) &= \frac{1}{2} \sum_{s,a} \left |  \alpha \pi'_{D} + (1-\alpha) \pi_{D} - \pi_{D} \right | \\
        &= \frac{1}{2} \alpha D_{\rm TV}(\pi_{D}||\pi'_{D}).
\end{aligned}
\end{equation}
Combining Eqs. (\ref{gamma_non_iid}), (\ref{first_term_iid}), and (\ref{second_term_iid}), we have
\begin{equation}
    \Gamma = \alpha (1-\alpha) K D_{\rm TV}(\pi_{D}||\pi'_{D}). \label{gamma_noniid_final}
\end{equation}

\noindent Eq. (\ref{gamma_noniid_final}), shows that $\Gamma$ is influenced both by the policy discrepancy $D_{\rm TV}(\pi_{D}||\pi'_{D})$ and the policy synchronous rate $\alpha$. $\Gamma$ takes the maximal value with respect to $\alpha$ at $\alpha = 0.5$. For this value, we would expect the convergence of FEMRL to be most hindered due to highly heterogeneous clients. In the next section, we will show how the degree of non-IID of clients' data distributions affects the rate of the reward improvement for FEMRL. 

\section{Experimental Evaluation}
\label{sec::experiment-sec}
In this section, we evaluate the proposed FEMRL with model-free FRL algorithms in standard RL environments. We first give the implementation details about all the algorithms and environments. Next, we give the comparative assessment about FEMRL. Finally, we investigate the impact of non-IID client data, local update steps, and ensemble knowledge distillation.  

\subsection{Implementation details}
\label{sec::implementation-details}
We evaluate the performance of FEMRL on four realistic continuous control tasks (i.e., HalfCheetah, Ant, Hopper, and Swimmer) from the rllab framework \cite{Mujoco} which are widely used to evaluate the RL algorithms \cite{meng2021ppoaccel,chua2018deep,zhang2019asynchronous}. For all these tasks, we set the maximal episode length to 500. { \color{black}One important application scenario of edge computing is in smart manufacturing where robots are widely used to improve production automation and productivity \cite{chen2018edge}. In the context of smart manufacturing, the proliferation of terminal devices (e.g., mobile robots and mechanical arms) has given rise to new challenges for the real-time operation and maintenance, scalability, and reliability. Edge computing aims to address these challenges by providing edge servers with networking, computing, and storage capabilities close to the manufacturing unit to meet key performance requirements. Therefore, in our experiments, we consider robotics environments for the local learning clients, which together with the simulated edge server can reflect a typical edge computing scenario in smart manufacturing. 
} We assume all environments running on an edge computing platform which includes multiple user devices and an edge server. All user devices share the same environment dynamics as we discussed in section \ref{methodology}. 

We implement FEMRL and all baseline algorithms by using Pytorch ($\ge$1.7.0). Specially, the dyanmics of the MDP is approximated by the feed-forward neural network with two hidden layers and each layer includes 500 units. The activation function at each layer is ReLU. Instead of directly predicting the next state, the network predicts the normalised differences between the next state $s_{t+1}$ and $s_{t}$ as in previous works \cite{kurutach2018model,luo2018algorithmic}. Each client maintains its own normalised statistics (i.e., the mean $\mu$, and standard deviation $\sigma$) based on the sampled local dataset. The normalised difference can be calculated as $((s_{t+1} - s_{t}) - \mu) / \sigma $. The policy neural network is also implemented by a feed-forward neural network with two hidden layers, each of which has 128 hidden units. We use ReLU as the activation function and the output of the policy neural network is a Gaussian distribution $\mathcal{N}(\mu(s), \sigma^2)$ where $\sigma$ is a state-independent trainable vector.

For other default settings of FEMRL, we set the number of inner loops as $n_{\rm inner}=20$ at each training epoch. Each client conducts 500 environment steps using the sample policy and stores the sampled data locally. We set the batch size of local updates for the dynamics model as 128 for all clients. Each client conducts $E=80$ steps of local training with Adam (with learning rate $10^{-3}$) and then uploads parameters of the local dynamics model to the server. The server then aggregates the uploaded models into a single global model through knowledge distillation. Specifically, we use the sample policy to sample trajectories based on the ensemble of client models and then apply the student-teacher scheme to train a single global model on the fictional trajectories. The learning rate and batch size of the knowledge distillation are set as $10^{-3}$ and $128$, respectively. At each epoch, we optimise the dynamics model and policy alternatively for $n_{\rm inner} = 20$ times. At each inner loop,  we conduct $T_{\rm c}=5$ communication rounds between clients and server for training the dynamic models. After the training for the dynamics model, we then use a policy gradient algorithm (TRPO) to train the policy. We set the number of iterations for policy training as $G=20$. 

\begin{figure*}
  \centering
  \includegraphics[width=\textwidth]{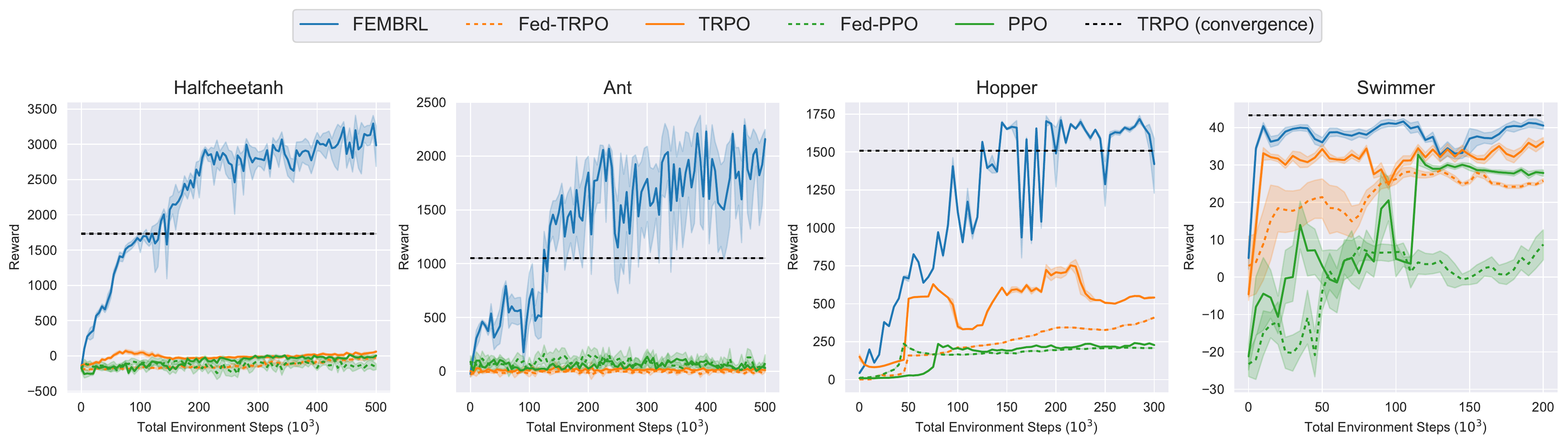}
  \centering \caption{ {\color{black} Global total reward during training for FEMRL (blue) and the four baselines on continuous control benchmarks. Solid curves show the average over 10 trials, and shaded regions show the standard deviation of the mean. The dotted horizontal lines give the final total reward of TRPO after 5 million environment steps.} }
  \label{fig:default_results}
\end{figure*}

{\color{black} In our framework, edge devices run two computation tasks: sampling data from the environment and training the local dynamic models. The sampling process is conducted by forward propagation of the policy network with the linear time complexity $O(n)$. Here $n$ is the size of the sampled data and is usually small for an edge device. In addition, the process of training the local dynamic models is the same as most of federated learning algorithms that have the linear  computation complexity $O(m)$, where $m$ is the number of local training samples. Therefore, the computation overhead of our method is acceptable for edge devices. }

For the settings of the model-free baseline algorithms, we use two advanced policy gradient methods: PPO and TRPO. Both PPO and TRPO use General Advantage Estimator (GAE) \cite{schulman2015high} to measure advantages. The policy networks of all baseline algorithms share the same settings as FEMRL. The hyperparameters settings for centralised TRPO and PPO are listed in Tables \ref{hyperparameters_trpo} and \ref{hyperparameters_ppo}, respectively. Fed-TRPO and Fed-PPO share most of the hyperparameters settings as their centralised counterparts except batch size (TRPO) and environment steps per epoch (PPO). Since Fed-TRPO and Fed-PPO do not collect data from clients, thus the batch size of Fed-TRPO on each client is set as 500 while the environment steps per epoch of Fed-PPO on each client is set as 500. 

\begin{table}[h]
  \renewcommand\arraystretch{1.2}
  \renewcommand{\tabcolsep}{1.5mm}
  \caption{Hyperparameters for TRPO.}
  \label{hyperparameters_trpo}
  \centering
  \begin{tabular}{ c | c || c | c }
    \hline
    \textbf{Hyperparameter} & \textbf{Value} & \textbf{Hyperparameter} & \textbf{Value} \\ 
    \hline
    Batch Size & 5000 & Max KL Divergence & 0.01\\
    \hline
    Discount $\gamma$ & 0.99 & GAE $\lambda$ & 0.95 \\
    \hline
    Conj. Gradient Damping & 0.1 & Conj. Gradient Steps & 10 \\
    \hline
  \end{tabular}
\end{table}

\begin{table}[h]
  \renewcommand\arraystretch{1.2}
  \renewcommand{\tabcolsep}{1.5mm}
  \caption{Hyperparameters for PPO.}
  \label{hyperparameters_ppo}
  \centering
  \begin{tabular}{ c | c || c | c }
    \hline
    \textbf{Hyperparameter} & \textbf{Value} & \textbf{Hyperparameter} & \textbf{Value} \\ 
    \hline
    Batch Size & 100 &  Env. Steps per Epoch & 5000 \\
    \hline
    Learning Rate & 0.001 &  Optimizer & Adam \\
    \hline
    GAE $\lambda$ & 0.95 &  Discount $\gamma$ & 0.99 \\
    \hline
    Ent. Coefficient & 0.01 & Clipping Value $\epsilon$ & 0.2  \\
    \hline
  \end{tabular}
\end{table}

\subsection{Comparative assessment}

We first compare the sampling efficiency of FEMRL to 4 other algorithms: 1) TRPO \cite{schulman2015trust}, a model-free policy-gradient based algorithm running centrally, where all client samples are collected on the server (thus breaking the FL assumption). The policy is updated using the gathered samples. 2) Proximal Policy Optimisation (PPO) \cite{schulman2017proximal}, a model-free RL algorithm also running centrally. 3) Federated TRPO (Fed-TRPO), where each client collects samples from the environment and updates the local policy based on the collected samples. After the local update of the policies on the clients, the server averages all uploaded client policies, creating a global policy for the next round of training. 4) Federated PPO (Fed-PPO), again applying PPO to the FL setting. Both Fed-TRPO and Fed-PPO are model-free FRL methods. The existing federated RL methods, e.g., \cite{FedDeepRL,AWFDRL,liu2019lifelong}, share the same FL architecture as Fed-TRPO and Fed-PPO, but differ in the model-free RL algorithm used.

In FEMRL, after the policy update on the server, the parameters of the policy network are sent to clients to update their local policies. However, the update of local policies at clients can be asynchronous: some clients receive the updated policy, others do not receive it and thus will use the old policy for sampling. As a consequence, clients will have heterogeneous sampling policies. We denote the policy synchronous rate as $\alpha$, where only $\alpha K$ clients will receive the updated sample policy at each training epoch. As the default setting of FEMRL, we set $\alpha = 0.3$, the number of local update steps of FL $E = 80$, and the number of FL communication rounds $T_{\rm c}=5$. We use $K=10$ clients for all algorithms. Each client performs 500 environment steps at each epoch, which therefore has 5000 total environment steps. FEMRL first trains the dynamics model based on the sampled data, and then uses this model to generate fictitious data for policy updating. In contrast, the model-free algorithms (i.e., TRPO, PPO, Fed-TRPO, Fed-PPO) directly use the sampled data for policy update.  Due to the sparse reward signal of RL, they generally require huge numbers of interactions with the environment to obtain effective policies, leading to sample inefficiency.

Fig. \ref{fig:default_results} shows the policy improvement rate of FEMRL and the four baseline algorithms. {\color{black} The learning parameters of all the algorithms in Fig. \ref{fig:default_results} use the default settings as given in the previous paragraph and Section \ref{sec::implementation-details}. } The dotted lines demonstrate the final performance of (centralised) TRPO after 5 million environment steps. The performance of  (centralised) TRPO or PPO acts as a soft upper bound of the federated counterpart (i.e., Fed-TRPO, or Fed-PPO). FEMRL learns substantially faster and achieves the best performance with 0.5 million or fewer environment steps. For example, FEMRL achieves the same performance at 120k environment steps as TRPO does after 5 million environment steps in the HalfCheetah and Ant environments. {\color{black} FEMRL is an FL variant of the model-based RL \cite{luo2018algorithmic}, where we train the environment dynamics model via FL and optimize the policy by interacting with the learned dynamics model directly. Therefore, FEMRL can achieve better sample efficiency than model-free RL methods and their FL variants.}

\subsection{The impact of non-IID client data}

\begin{figure}[t]
    \centering
    \includegraphics[width=3.0in]{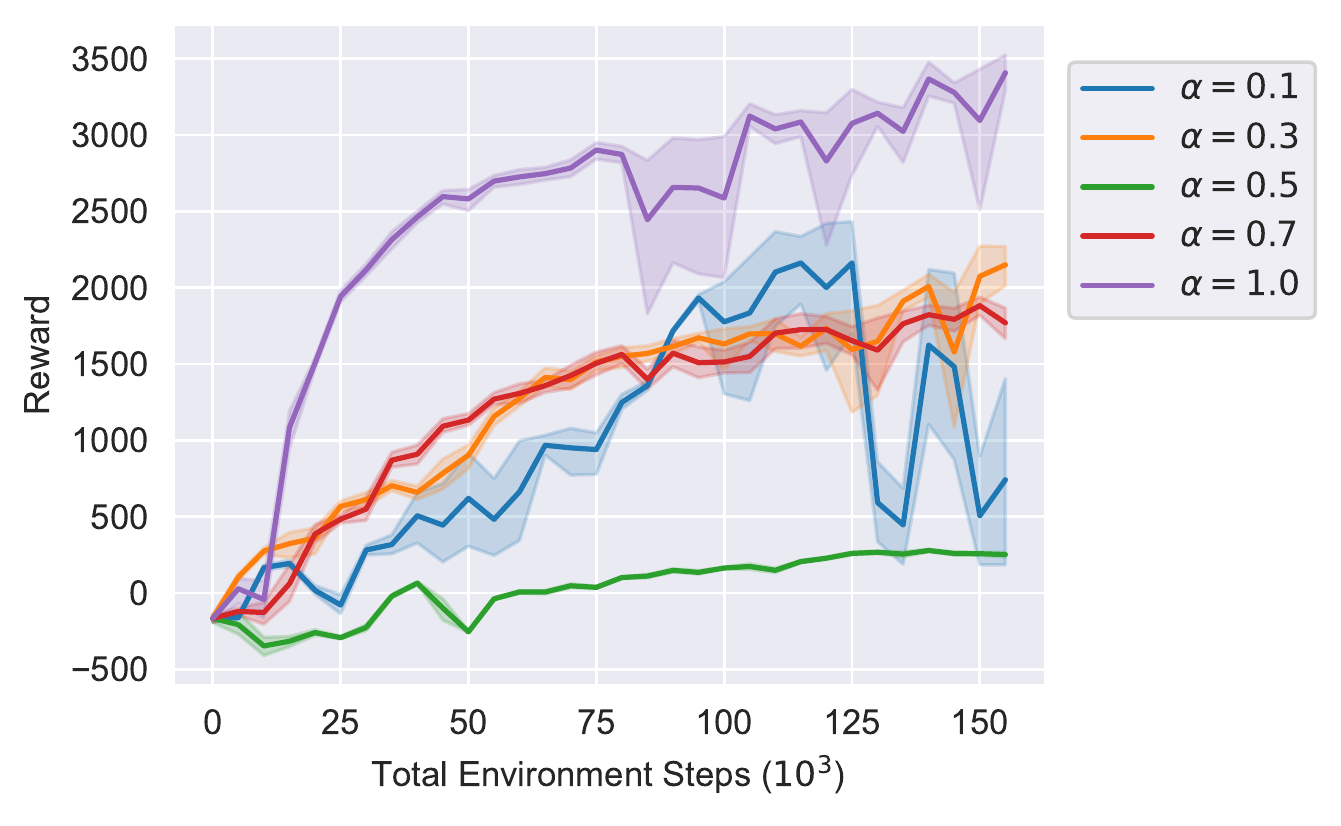}
    \centering\caption{Performance of FEMRL with different policy synchronous rates (i.e., $\alpha$) on HalfCheetah.}
    \label{halfcheetanh-noniid}
\end{figure}

\begin{figure}[t]
  \includegraphics[width=3.0in]{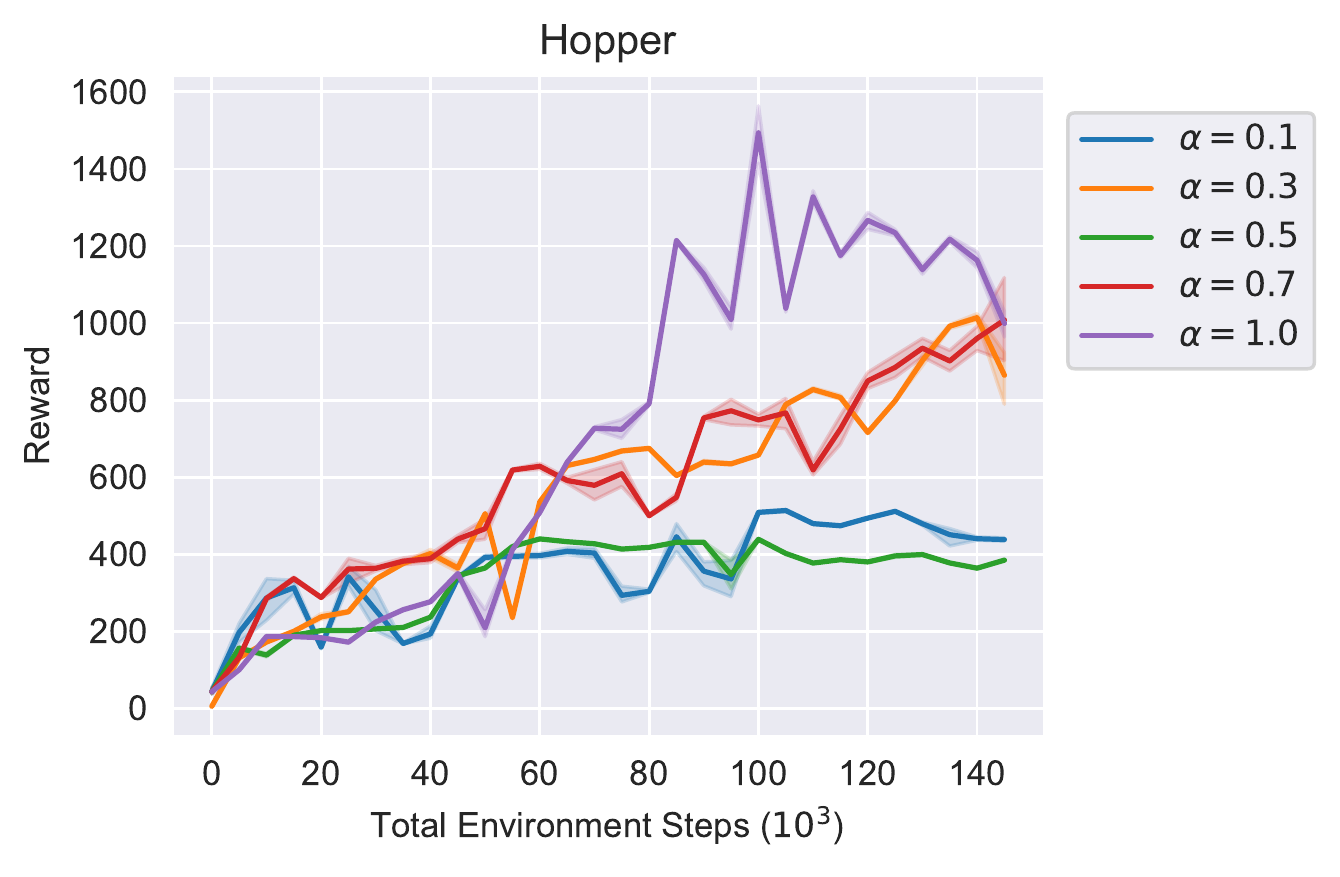}
  \centering\caption{Performance of FEMRL with different policy synchronous rates (i.e., $\alpha$) on Hopper.}
  \label{hopper-ant-noniid}
\end{figure}

Clients with non-IID datasets possess unique, non-identical minimisers to their local objectives. During the local-update phase of FL, each participating client's model will diverge from the global model and move towards their local minimiser. This divergence is termed `client-drift' \cite{SCAFFOLD} and has been extensively shown to harm the performance of the global model. The greater the degree of non-IID client data, and the more local steps clients perform, the greater the level of client-drift. In this section, we investigate how non-IID client data impacts the performance of FEMRL. 

Eq. (\ref{gamma_noniid_final}) shows that the degree of non-IID is determined by $\alpha$ for FEMRL. Therefore, we evaluate FEMRL with varying $\alpha$ on HalfCheetah and Fig. \ref{halfcheetanh-noniid} shows the training curves. When $\alpha=0.5$, the rate of policy improvement is slowest due to the highly non-IID client data: higher model error results in a worse policy. The rate of policy improvement naturally is the fastest when $\alpha = 1.0$, as $\Gamma$ is 0 (according to Eq. (\ref{gamma_noniid_final})) that represents an IID scenario. When $\alpha = 0.1$, although $\Gamma$ is small, performance is still low because the discrepancy (i.e., $\epsilon_\pi$) between the sample policy and target policy is large. Lemma \ref{lemma1} reveals the relationship between $\epsilon_\pi$ and the returns of the dynamics model and the environment. The $\alpha \in \{0.3, 0.7\}$ curves show that the policy improvement rate of FEMRL falls gracefully as $\alpha \to 0.5$. 

Fig. \ref{hopper-ant-noniid} shows the performance of FEMRL on Hopper with varying policy synchronisation rates. As expected, when $\alpha=1.0$, the client data is purely IID, therefore FEMRL can achieve the best performance. In contrast, when $\alpha=0.5$, the degree of non-IID is maximal, therefore, FEMRL obtains the worst performance.

\subsection{The impact of local update steps}
\begin{figure}[t]
    \centering
    \includegraphics[width=3.0in]{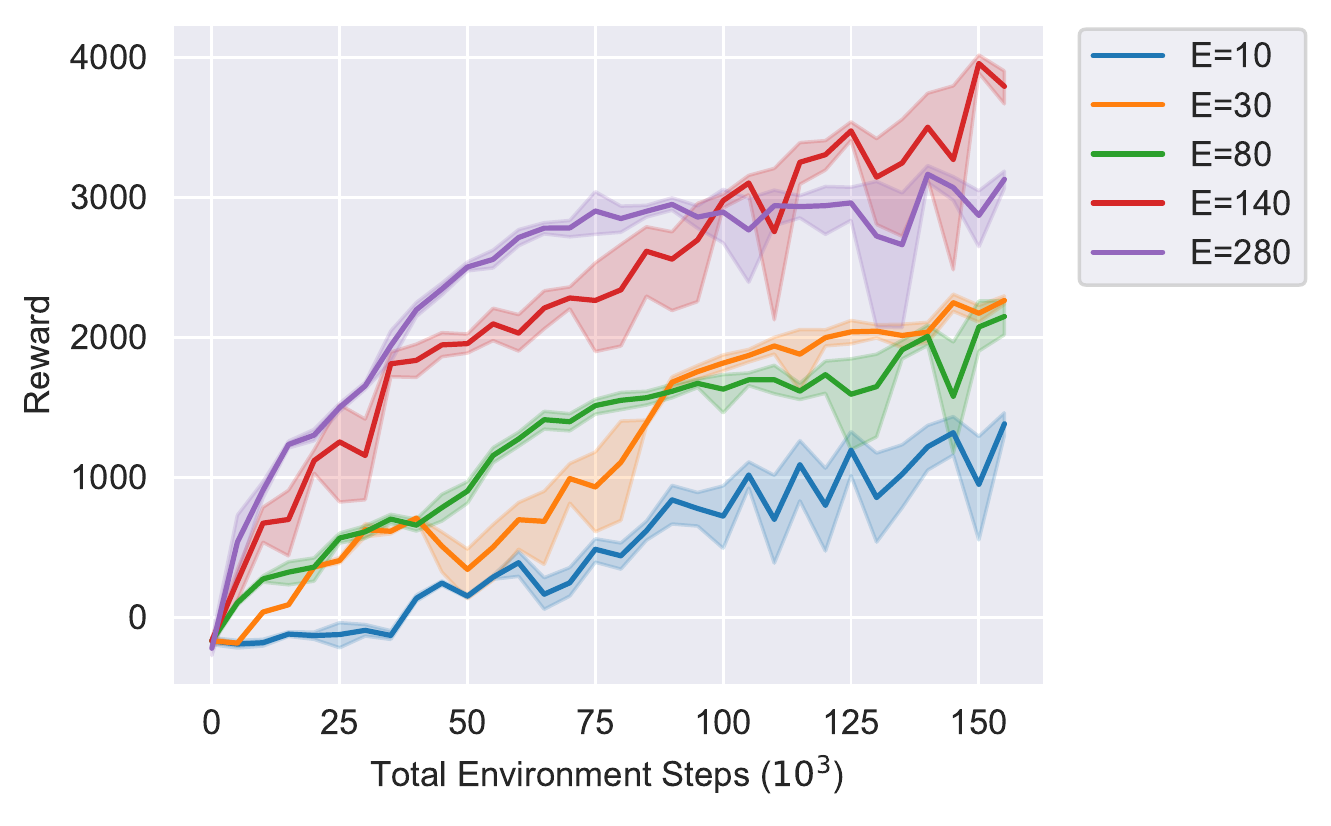}
    \centering\caption{Performance of FEMRL with different numbers of local update steps (i.e., $E$) on HalfCheetah.}
    \label{halfcheetanh-localsteps}
\end{figure}

\begin{figure}[t]
  \includegraphics[width=3.0in]{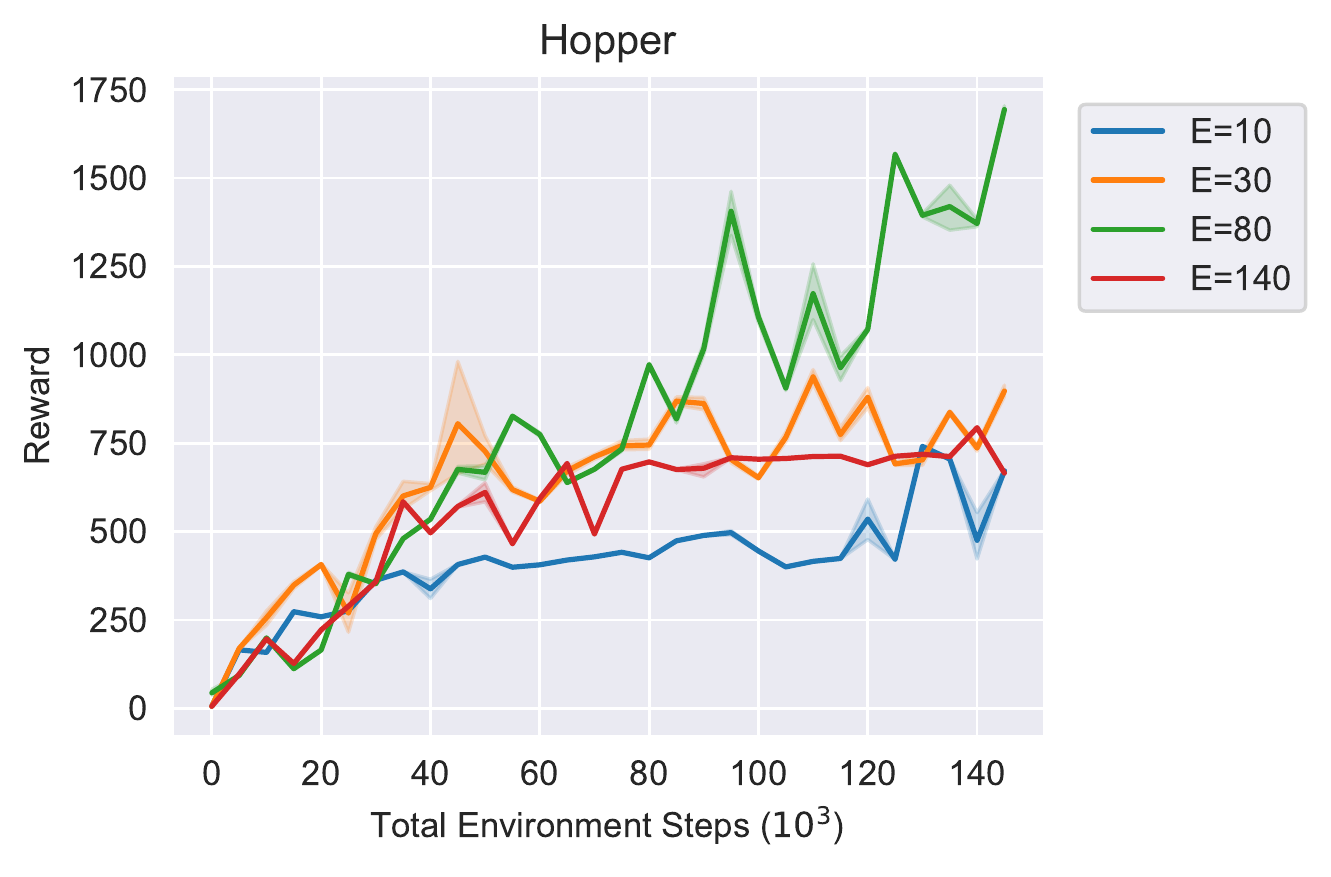}
  \centering\caption{Performance of FEMRL with different numbers of local update steps (i.e., $E$) on Hopper.}
  \label{hopper-ant-local}
\end{figure}


Previous works have shown that the number of local steps of SGD that clients perform, $E$, is a key factor affecting the convergence of FL algorithms \cite{fedavg, quagmire, SCAFFOLD}. Larger $E$ allows clients to do more work locally and make more progress, but the final performance of the global model is harmed when the data on clients is non-IID. 

Fig. \ref{halfcheetanh-localsteps} shows the convergence of FEMRL with a varying number of local steps $E$, for a fixed number of communication rounds $T_{\rm c} = 5$. As expected, as $E$ increases, the initial rate of policy improvement increases as clients make more progress in training the dynamics model. However, as $E$ becomes very large ($E = 280$), the final reward plateaus at 3000, as the environment model reaches a local optimum and the mininum error it can achieve is harmed. In this scenario, the value of $E = 140$ strikes a good trade-off between policy improvement rate and maximum reward.

Fig. \ref{hopper-ant-local} shows the performance of FERML on varying numbers of local update steps, for a fixed number of communication rounds $T_c=5$. As expected, both small ($E=10, 30$) and large ($E=140$) number of local update steps can harm the convergence rate. The value of $E=80$ achieves the best performance in this scenario. 

\subsection{The impact of ensemble knowledge distillation}

\begin{figure}[h]
  \centering
  \includegraphics[width=2.5in]{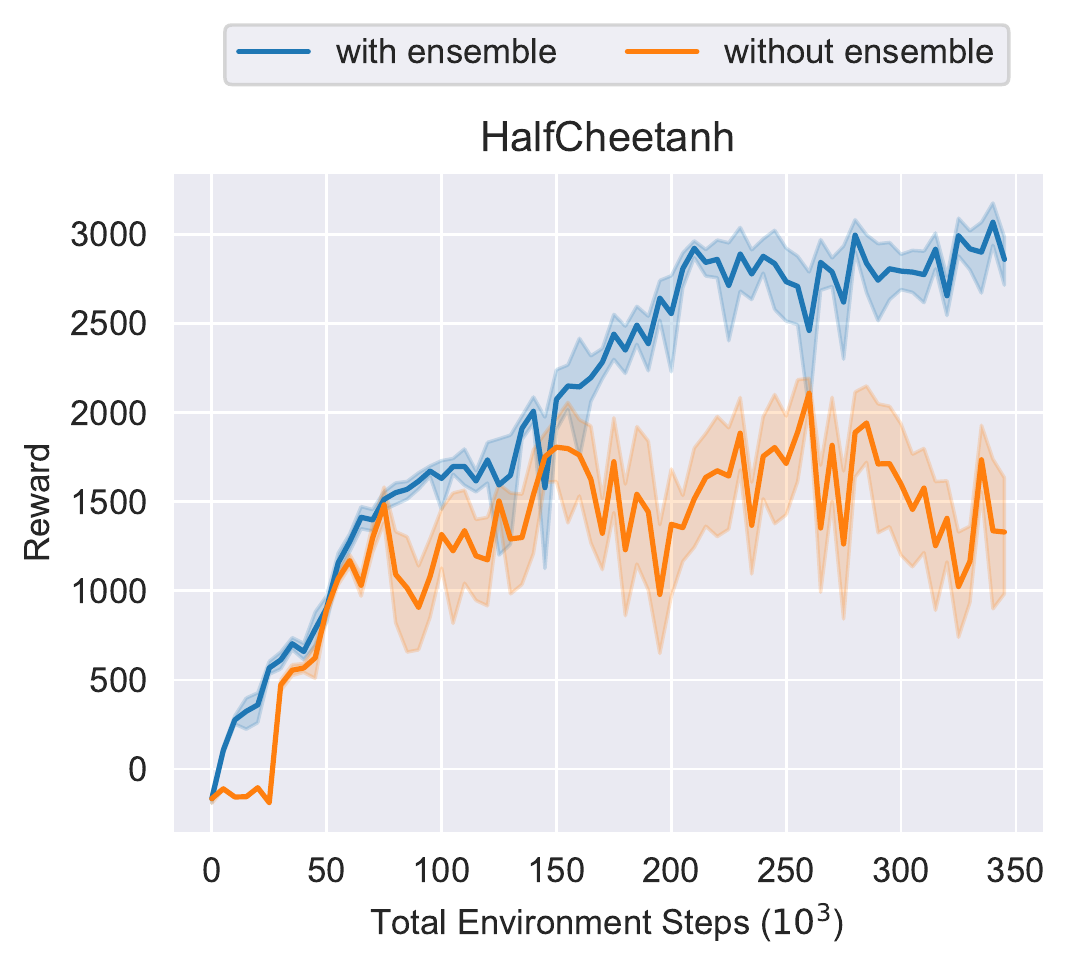}
  \centering\caption{Performance of FEMRL with or without ensemble knowledge distillation on HalfCheetah.}
  \label{halfcheetah-en}
\end{figure}

We then investigate how the ensemble knowledge distillation method affects the performance. We train FEMRL on HalfCheetah without using ensemble knowledge distillation. Specifically, we directly average the uploaded parameters of clients' models and create a single global dynamics model for the FL training process. After $T=5$ rounds of FL training, we use the global dynamics model for the policy-updating process using TRPO. The rest of the hyperparameter settings are  the same as the default settings. Fig. \ref{halfcheetah-en} shows the results of FEMRL on HalfCheetah with or without ensemble knowledge distillation. We find that the ensemble model distillation method can significantly improve the performance of FEMRL, indicating the importance and necessity of combining ensemble knowledge distillation into our method.

\subsection{Discussion} 
{\color{black}The proposed FEMRL is a general federated RL method that is not limited to a specific problem. We can adapt FEMRL to other edge computing scenarios by modifying the structure of the policy network and dynamics model to fit the dimension of the state and action space of the specific edge computing application. For example, task offloading is a typical edge computing application, which enables to offload computation-intensive tasks of mobile applications from user devices to edge servers. However, unlike the continual action space of robotics control tasks defined in our experiments, the action space of the task offloading problem is generally discrete \cite{mach2017mobile}. To adapt FEMRL to the task offloading problem, we need to redesign the input/output layers of the policy network and dynamics model to fit the discrete state and action space defined in the task offloading problem, especially replacing the output layer of the policy network from a Gaussian distribution to a Categorical distribution. While the training process of FEMRL remains the same. 

Although FEMRL has many benefits to MEC applications, there are several challenges requiring further exploration. In particular, the performance of the trained RL policy might deteriorate when handling fast-changing environments. In fact, how to enhance the generalisation ability of DRL methods for fast-changing environments is still an open problem in RL \cite{dulac2021challenges}. We feel that a useful solution for generalisation objectives would constitute a whole new paper, so we leave this to future work: we intend to combine meta-learning \cite{jeong2020ood,nagabandi2018learning} into our framework to solve the out-of-distribution problem for enhancing its generalisation ability. 
}

\section{Conclusion}
\label{conclusion}
In this paper, we proposed a novel federated RL algorithm, FEMRL, for edge computing systems, which incorporates model-based RL, and ensemble distillation technologies into FL. In FEMRL, clients train their local dynamics model based on their locally sampled data. An ensemble of the dynamics models is then created at the edge server based on the updated local models. We use the ensemble model for both policy training and FL model aggregation (by an ensemble distillation method). The updated policy is then sent to clients for the next-round of sampling process. We provide a rigorous theoretical analysis to prove the monotonic improvement of FEMRL in federated setting with non-IID client data. Finally, we evaluate FEMRL based on four challenging continuous control tasks. Experiment results demonstrate that FEMRL can achieve much higher sample efficiency than federated model-free counterparts.

\section{Acknowledgement}
\label{sec::acknowledgement}
This work was supported in part by EU Horizon 2020 INITIATE Project under Grant 101008297, in part by Royal Society International Exchanges Project under Grant IEC/NSFC/ 211460, in part by EPSRC New Horizons fund EP/X019160/1, and in part by UKRI Project EP/X038866/1.

\bibliographystyle{IEEEtran} 
\bibliography{fedmbrl-ieee}

\clearpage

\setcounter{secnumdepth}{2}
\appendices
\renewcommand{\thesection}{\Alph{section}}
\section{Monotonic Improvement Guarantee}
In this section, we first provide some useful Lemmas for the theoretical analysis of monotonic improvement guarantee for FEMRL and then give the proof of Lemma \ref{lemma1}.
\subsection{Lemmas}
\begin{lemma}
\label{lemma:isinequality}
(Importance sampling inequality) For any distribution $\rho(s)$ and $\rho'(s)$ and a function $f(s)$, we have $\mathbb{E}_{s \sim \rho(s)}f(s) \leq \mathbb{E}_{s \sim \rho'(s)}f(s) + |\rho(s) - \rho'(s)|f_{\rm max}$, where $f_{\rm max}$ is the maximal value of $f(s)$.
\end{lemma}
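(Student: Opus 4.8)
\textbf{Proof proposal for Lemma \ref{lemma:isinequality}.}

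The plan is to prove this directly from the definition of expectation by bounding the difference $\mathbb{E}_{s \sim \rho(s)} f(s) - \mathbb{E}_{s \sim \rho'(s)} f(s)$. First I would write the difference as a single sum (or integral) against the signed measure $\rho(s) - \rho'(s)$, obtaining $\sum_s \big(\rho(s) - \rho'(s)\big) f(s)$. The key observation is that this quantity is controlled by pulling $f$ out in absolute value: $\sum_s \big(\rho(s) - \rho'(s)\big) f(s) \le \sum_s \big|\rho(s) - \rho'(s)\big| \, |f(s)| \le f_{\max} \sum_s \big|\rho(s) - \rho'(s)\big|$, where $f_{\max}$ denotes the maximal value of $f(s)$ (here I would implicitly assume $f$ is nonnegative, or work with $|f|$, matching how the bound is used downstream; I should double-check the paper's convention for $f_{\max}$ versus $|f|_{\max}$). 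Interpreting the final sum as the $\ell_1$ distance $|\rho(s) - \rho'(s)|$ in the notation of the statement, this gives exactly $\mathbb{E}_{s \sim \rho(s)} f(s) \le \mathbb{E}_{s \sim \rho'(s)} f(s) + |\rho(s) - \rho'(s)| f_{\max}$.

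The steps in order: (i) expand both expectations as sums over $s$; (ii) combine into $\sum_s (\rho(s) - \rho'(s)) f(s)$; (iii) apply the triangle inequality termwise and bound $|f(s)| \le f_{\max}$; (iv) re-identify $\sum_s |\rho(s) - \rho'(s)|$ with the notation $|\rho(s) - \rho'(s)|$ used in the lemma statement and conclude. For the continuous case the same argument goes through with integrals in place of sums, so I would state it once and remark that it applies verbatim.

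I do not expect a genuine obstacle here — this is a one-line application of the triangle inequality. The only point requiring a little care is the meaning of $f_{\max}$: if $f$ can be negative, the clean statement requires either $f_{\max} = \sup_s |f(s)|$ or a sign assumption on $f$; I would make this explicit in the proof so the later invocations (where $f$ is typically a value function or reward-related quantity that is bounded) are justified without ambiguity. A secondary minor point is ensuring $\sum_s |\rho(s)-\rho'(s)|$ is finite, which holds since both are probability distributions, so the sum is at most $2$.
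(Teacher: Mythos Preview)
Your proposal is correct and lands on the same key bound as the paper. The only presentational difference is that the paper first rewrites $\mathbb{E}_{s\sim\rho} f(s)$ as $\mathbb{E}_{s\sim\rho'}\frac{\rho(s)}{\rho'(s)}f(s)$ via an importance-sampling change of measure (hence the lemma's name), then splits this into $\mathbb{E}_{s\sim\rho'} f(s) + \sum_s (\rho(s)-\rho'(s))f(s)$ and bounds the second term exactly as you do; your direct expansion of the difference of expectations is slightly more elementary and sidesteps the implicit absolute-continuity requirement $\rho \ll \rho'$ that the ratio introduces. Your remarks about the sign of $f$ and interpreting $f_{\max}$ as $\sup_s |f(s)|$ are well taken and in fact tighten a point the paper leaves implicit.
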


\begin{proof}
\begin{equation*}
\begin{aligned}
    \mathbb{E}_{s \sim \rho(s)}f(s) &=  \mathbb{E}_{s \sim \rho'(s)}\frac{\rho(s)}{\rho'(s)}f(s) \\
    &= \mathbb{E}_{s \sim \rho'(s)}\frac{\rho(s)-\rho'(s) + \rho'(s)}{\rho'(s)}f(s) \\
    &=\mathbb{E}_{s \sim \rho'(s)}f(s) +\mathbb{E}_{s \sim \rho'(s)}(\rho(s)-\rho'(s))f(s) \\
    &\leq  \mathbb{E}_{s \sim \rho'(s)}f(s) + \sum_{s} |\rho(s)-\rho'(s)| f_{\rm max} \\
    &\leq \mathbb{E}_{s \sim \rho'(s)}f(s) + ||\rho(s)-\rho'(s)||_1 f_{\rm max}.
\end{aligned}
\end{equation*}
\end{proof}

\begin{lemma}
\label{lemma:boundeddiscount}
(Bounded difference of discounted state distributions). Let $\pi$ and $\pi_{D}$ be two different policies and $ \epsilon_{\pi} = D_{\rm TV}(\pi || \pi_{D})$, we have:

\begin{equation*}
    \| \rho_{\pi}^{\mathcal{M}} - \rho_{\pi_{D}}^{\mathcal{M}} \|_1 \leq \frac{2\gamma}{(1-\gamma)^2} \epsilon_{\pi}.
\end{equation*}

\end{lemma}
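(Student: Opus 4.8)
The plan is to run the standard telescoping argument on the discounted state-occupancy measure (in the spirit of Schulman \emph{et al.}'s TRPO analysis and Achiam \emph{et al.}'s CPO analysis). Write the discounted state distribution in its \emph{unnormalised} form $\rho_{\pi}^{\mathcal{M}}(s) = \sum_{t=0}^{\infty} \gamma^{t} d_{t}^{\pi}(s)$, where $d_{t}^{\pi}(s) := \Pr(S_{t} = s \mid \pi, \rho_{0})$ is the state marginal at step $t$; this is the convention consistent with the value function in Eq.~(\ref{value_function}), so that $\|d_{t}^{\pi}\|_{1} = 1$ for all $t$ and $\|\rho_{\pi}^{\mathcal{M}}\|_{1} = 1/(1-\gamma)$. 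Let $P_{\pi}$ be the one-step state-transition kernel induced by $\pi$, $P_{\pi}(s'\mid s) = \sum_{a}\pi(a\mid s)\,T(s'\mid s,a)$, so that $d_{t+1}^{\pi} = P_{\pi} d_{t}^{\pi}$ and $d_{0}^{\pi} = d_{0}^{\pi_{D}} = \rho_{0}$.

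Next I would establish two elementary facts and then an induction. (i) For any signed measure $\mu$ on $\mathcal{S}$, $\|P_{\pi}\mu\|_{1} \le \|\mu\|_{1}$, since $P_{\pi}$ is a Markov kernel. (ii) The two kernels are close row-wise: for each $s$,
\begin{equation*}
\sum_{s'}\bigl|P_{\pi}(s'\mid s) - P_{\pi_{D}}(s'\mid s)\bigr| \le \sum_{a}\bigl|\pi(a\mid s) - \pi_{D}(a\mid s)\bigr| = 2\,D_{\rm TV}\!\bigl(\pi(\cdot\mid s)\,\|\,\pi_{D}(\cdot\mid s)\bigr) \le 2\epsilon_{\pi},
\end{equation*}
using $\sum_{s'}T(s'\mid s,a)=1$ and Assumption~2, so that $\|(P_{\pi}-P_{\pi_{D}})\mu\|_{1}\le 2\epsilon_{\pi}\|\mu\|_{1}$ for any distribution $\mu$. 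Combining these with the telescoping identity
\begin{equation*}
d_{t+1}^{\pi} - d_{t+1}^{\pi_{D}} = P_{\pi}\bigl(d_{t}^{\pi} - d_{t}^{\pi_{D}}\bigr) + \bigl(P_{\pi} - P_{\pi_{D}}\bigr)d_{t}^{\pi_{D}}
\end{equation*}
and $\|d_{t}^{\pi_{D}}\|_{1}=1$ gives the recursion $\|d_{t+1}^{\pi}-d_{t+1}^{\pi_{D}}\|_{1}\le \|d_{t}^{\pi}-d_{t}^{\pi_{D}}\|_{1}+2\epsilon_{\pi}$; since the two sequences agree at $t=0$, induction yields $\|d_{t}^{\pi}-d_{t}^{\pi_{D}}\|_{1}\le 2t\epsilon_{\pi}$.

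Finally I would sum the series, using $\sum_{t=0}^{\infty} t\gamma^{t} = \gamma/(1-\gamma)^{2}$:
\begin{equation*}
\bigl\|\rho_{\pi}^{\mathcal{M}} - \rho_{\pi_{D}}^{\mathcal{M}}\bigr\|_{1} \le \sum_{t=0}^{\infty}\gamma^{t}\bigl\|d_{t}^{\pi}-d_{t}^{\pi_{D}}\bigr\|_{1} \le 2\epsilon_{\pi}\sum_{t=0}^{\infty}t\gamma^{t} = \frac{2\gamma}{(1-\gamma)^{2}}\,\epsilon_{\pi},
\end{equation*}
which is the claimed bound. I expect the only genuinely delicate points to be bookkeeping ones: being explicit that the lemma uses the unnormalised occupancy convention (the normalised convention would shave off a factor $1-\gamma$ and break consistency with Eq.~(\ref{value_function})), and justifying facts (i) and the recursion for \emph{signed} measures rather than just probability distributions. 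The rest is routine geometric-series manipulation.
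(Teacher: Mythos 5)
Your proof is correct, and it reaches the stated bound by a recognizably different route from the paper's. The paper works at the level of operators: it sets $\mathbf{G}=(\mathbf{I}-\gamma\mathbb{P}_{\pi}^{\mathcal{M}})^{-1}$, $\mathbf{G_D}=(\mathbf{I}-\gamma\mathbb{P}_{\pi_D}^{\mathcal{M}})^{-1}$, uses the resolvent identity $\mathbf{G_D}\rho_0-\mathbf{G}\rho_0=\gamma\,\mathbf{G}\,\mathbf{\Delta}\,\mathbf{G_D}\,\rho_0$ with $\mathbf{\Delta}=\mathbb{P}_{\pi_D}^{\mathcal{M}}-\mathbb{P}_{\pi}^{\mathcal{M}}$, and then bounds via submultiplicativity of the induced $\ell_1$ norm, $\|\mathbf{G}\|_1\|\mathbf{\Delta}\|_1\|\mathbf{G_D}\|_1\le(1-\gamma)^{-1}\cdot 2\epsilon_\pi\cdot(1-\gamma)^{-1}$. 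You instead telescope the per-timestep state marginals, prove $\|d_t^{\pi}-d_t^{\pi_D}\|_1\le 2t\epsilon_\pi$ by induction, and sum $\sum_t t\gamma^t=\gamma/(1-\gamma)^2$. The two are algebraically equivalent (expanding $\gamma\mathbf{G}\mathbf{\Delta}\mathbf{G_D}$ as a Neumann series reproduces exactly your telescoping sum), but each buys something: the paper's version is compact and hides the induction inside the norm bounds $\|\mathbf{G}\|_1=(1-\gamma)^{-1}$ and $\|\mathbf{\Delta}\|_1=2D_{\rm TV}$, which it asserts rather than derives; your version is more elementary, makes explicit where the $(1-\gamma)^{-2}$ factor comes from, and is more careful on exactly the points the paper glosses over (the contraction property of Markov kernels on signed measures, the row-wise kernel bound via $\sum_{s'}T(s'\mid s,a)=1$, and the unnormalised occupancy convention needed for consistency with Eq.~(\ref{value_function})). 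Both proofs share the same implicit reading of Assumption~2, namely that $\epsilon_\pi$ bounds $D_{\rm TV}(\pi(\cdot\mid s)\,\|\,\pi_D(\cdot\mid s))$ uniformly over $s$; you at least state this explicitly.
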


\begin{proof}
Define $\mathbb{P}_{\pi}^{\mathcal{M}}$ and $\mathbb{P}_{\pi_{D}}^{\mathcal{M}}$ as the transition kernels of the MDP $\mathcal{M}$ following policies $\pi$ and $\pi_{D}$, respectively. Let $\mathbf{G} = ( \mathbf{I} + \gamma \mathbb{P}_{\pi}^{\mathcal{M}} + (\gamma \mathbb{P}_{\pi}^{\mathcal{M}})^2 + ...  ) = (\mathbf{I} - \gamma \mathbb{P}_{\pi}^{\mathcal{M}})^{-1}$ and $\mathbf{G_D} = ( \mathbf{I} + \gamma \mathbb{P}_{\pi_{D}}^{\mathcal{M}} + (\gamma \mathbb{P}_{\pi_{D}}^{\mathcal{M}})^2 + ...  ) = (\mathbf{I} - \gamma \mathbb{P}_{\pi_{D}}^{\mathcal{M}})^{-1}$. Let $\mathbf{\Delta} = \mathbb{P}_{\pi_{D}}^{\mathcal{M}} - \mathbb{P}_{\pi}^{\mathcal{M}}$. We start with some algebraic manipulations as:

$$\mathbf{G}^{-1} - \mathbf{G_D}^{-1} = (\mathbf{I} - \gamma \mathbb{P}_{\pi}^{\mathcal{M}}) - (\mathbf{I} - \gamma \mathbb{P}_{\pi_{D}}^{\mathcal{M}}) = \gamma \mathbf{\Delta}.$$

\noindent Left-multiplying by $\mathbf{G}$ and right-multiplying by $\mathbf{G_D}$, then multiplying by $\rho_0$:
\begin{equation*} 
\mathbf{G_D}\rho_0 - \mathbf{G}\rho_0 =  \gamma \mathbf{G} \mathbf{\Delta} \mathbf{G_D} \rho_0 .
\end{equation*}

\noindent Note that $\rho_{\pi}^{\mathcal{M}} = \mathbf{G}\rho_0$. By definition we have $\|\mathbf{G}\|_1 = (1 - \gamma)^{-1}$, $\| \mathbf{\Delta}\|_1 = 2 D_{\rm TV} (\pi || \pi_{D}) $, and $\| \rho_0\| = 1$. Hence:
\begin{equation*}
\begin{aligned}
    \| \rho_{\pi}^{\mathcal{M}} - \rho_{\pi_{D}}^{\mathcal{M}} \|_1 &= \| \gamma \mathbf{G} \mathbf{\Delta} \mathbf{G_D} \rho_0 \|_1 \leq \gamma \|\mathbf{G}\|_1 \| \mathbf{\Delta}\|_1 \|\mathbf{G_D}\|_1 \|\rho_0\|_1 \\
    &\leq \frac{2 \gamma}{(1-\gamma)^2}  D_{\rm TV} (\pi || \pi_{D}) = \frac{2 \gamma }{(1-\gamma)^2 } \epsilon_{\pi}.
\end{aligned}
\end{equation*}

\end{proof}

\subsection{Proof of Lemma \ref{lemma1}}
\label{appendix:lemma1s}

\begin{proof}[Proof] 
Let $\rho_{\pi}^{\mathcal{M}}$ be the discounted visitation frequencies \cite{schulman2015trust} over the state space as  $\rho_{\pi}^{\mathcal{M}}(s) = \sum_{t=0}^{\infty} \gamma^{t}P(S_t=s|\pi,\mathcal{M})$, where $P(S_t=s|\pi,\mathcal{M})$ denotes the probability of being in state $s$ at time step $t$ in the MDP $\mathcal{M} := (\mathcal{S}, \mathcal{A}, T, R, \rho_0, \gamma)$ following the policy $\pi$. We can define the expected discounted return as: 
\begin{equation}
\begin{aligned}
V_{\pi}^{\mathcal{M}}   &= \mathop{\mathbb{E}}\limits_{ \tiny \begin{array}{c} S_{t+1} \sim T(\cdot | S_t, A_t) \\ A_t \sim \pi(\cdot | S_t) \end{array} } \left [\sum_{t=0}^{\infty} \gamma^t R(S_{t}, A_{t}) \bigg\rvert S_0 = s_0 \right ] \\
                        &= \mathbb{E}_{s \sim \rho_{\pi}^{\mathcal{M}}(s), a \sim \pi(a|s)} \left [ R(s,a) \right ],
\end{aligned}
\label{discount_ret}
\end{equation}
where $s_0$ is the initial state.
\\

\noindent Let $W_j$ be the discounted total reward when executing $\pi$ on the dynamics model $\mathcal{M}$ for the first $j$ steps and the rest of the steps on $\widehat{\mathcal{M}}$. That is:
\begin{equation*}
    W_{j} = \mathop{\mathbb{E}}\limits_{\tiny \begin{array}{c} \forall t \geq 0, A_t \sim \pi(\cdot|S_t) \\ \forall j > t \geq 0, S_{t+1}\sim T(\cdot|S_t, A_t) \\ \forall t \geq j, S_{t+1} \sim \widehat{T}(\cdot|S_t, A_t) \end{array}} \left[ \sum_{t=0}^{\infty} \gamma^t R(S_t, A_t) | S_0 = s_0 \right] .
\end{equation*}

\noindent Note that we define $V_{\pi}^{\mathcal{M}} = \mathbb{E}_{s_0 \sim \rho_0} \left [ V_{\pi}^{\mathcal{M}}(s_0) \right ]$. By definition we have $W_{\infty} = V_{\pi}^{\mathcal{M}}$ and $W_0 = V_{\pi}^{\widehat{\mathcal{M}}}$, thus:

    \begin{equation*}
        V_{\pi}^{\widehat{\mathcal{M}}} - V_{\pi}^{\mathcal{M}} = \sum_{j=0}^{\infty} \left ( W_{j+1} - W_j \right ).
    \end{equation*}

\noindent We rewrite $W_j$ and $W_{j+1}$ as:
\begin{equation*}
    W_j = R_j + \mathop{\mathbb{E}}\limits_{A_j, S_j \sim \pi, T} \left [ \mathop{\mathbb{E}}\limits_{\hat{S}_{j+1} \sim \widehat{T}(\cdot|S_j, A_j) } \left [ \gamma^{j+1} V_{\pi}^{\widehat{\mathcal{M}}}(\hat{S}_{j+1}) \right ] \right ],
\end{equation*}

\begin{equation*}
    W_{j+1} = R_j + \mathop{\mathbb{E}}\limits_{A_j, S_j \sim \pi, T} \left [ \mathop{\mathbb{E}}\limits_{S_{j+1} \sim T(\cdot|S_j, A_j) } \left [ \gamma^{j+1} V_{\pi}^{\mathcal{M}}(S_{j+1}) \right ] \right ].
\end{equation*}

\noindent we define $G^{\pi}_{\widehat{\mathcal{M}}} (S,A) =  \mathop{\mathbb{E}}\limits_{S' \sim T(\cdot|S, A) } \left [ V_{\pi}^{\mathcal{M}}(S') \right ] - \mathop{\mathbb{E}}\limits_{\hat{S}' \sim \widehat{T}(\cdot|S, A) } \left [  V_{\pi}^{\widehat{\mathcal{M}}}(\hat{S}') \right ] $. Therefore:
\begin{equation*}
\begin{aligned}
    W_{j+1} - W_j &= \gamma^{j+1} \mathop{\mathbb{E}}\limits_{A_j, S_j \sim \pi, T} \left [  G^{\pi}_{\widehat{\mathcal{M}}} (S, A) \right ],
\end{aligned}
\end{equation*}
where $R_j$ is the expected accumulative reward of the first $j$ steps, which are taken w.r.t. dynamics model $\mathcal{M}$. Thus we have:
\begin{equation*}
\begin{aligned}
        V_{\pi}^{\widehat{\mathcal{M}}} - V_{\pi}^{\mathcal{M}} &= \sum_{j=0}^{\infty} \left ( W_{j+1} - W_j \right ) \\
         &= \sum_{j=0}^{\infty} \gamma^{j+1} \mathop{\mathbb{E}}\limits_{A_j, S_j \sim \pi, T}  \left [ G^{\pi}_{\widehat{\mathcal{M}}} (S, A) \right ]  \\
         &= \gamma \mathop{\mathbb{E}}\limits_{\tiny \begin{array}{c} S \sim \rho_{\pi}^{\mathcal{M}}, \\ A \sim \pi(\cdot|S) \end{array}}\left [ G^{\pi}_{\widehat{\mathcal{M}}} (S, A) \right ],
\end{aligned}
\end{equation*}
where the last equality is from applying Eq. \eqref{discount_ret}. For simplicity, we define $T(S,A) = T(s'|s,a)$ as the dynamics of the environment and $\widehat{T}(S,A) = \widehat{T}(s'|s, a)$ as the dynamics of the learned model. The reward function is bounded by $r_{\rm max}$ according to Assumption 3, we then have for any value function: $||V_{\pi}|| \leq \frac{1}{1-\gamma} r_{\rm max}$. Next, we bound $G^{\pi}_{\widehat{\mathcal{M}}} (S,A)$ as:
    \begin{equation*}
    \begin{aligned}
        G^{\pi}_{\widehat{\mathcal{M}}} (S,A) &= \sum_{S'} T(S,A) V_{\pi}^{\mathcal{M}}(S') - \sum_{S'}\widehat{T}(S,A) V_{\pi}^{\widehat{\mathcal{M}}} (S') \\ 
        &\leq \frac{r_{\rm max}}{1-\gamma} \sum_{S'} \left [ T(S,A) - \widehat{T}(S,A) \right ] \\
        &\leq \frac{2 r_{\rm max}}{1 - \gamma} D_{\rm TV}(T(S,A) \| \widehat{T}(S,A)).
    \end{aligned}
    \end{equation*}
    
\noindent Therefore:
\begin{equation}
\label{appendix:value_diff}
\begin{aligned}
    V_{\pi}^{\widehat{\mathcal{M}}} - V_{\pi}^{\mathcal{M}} 
     &\leq \frac{2 \gamma r_{\rm max}}{1-\gamma} \mathop{\mathbb{E}}\limits_{\tiny \begin{array}{c} S \sim \rho_{\pi}^{\mathcal{M}}, \\ A \sim \pi(\cdot|S) \end{array}}\left [ D_{\rm TV}(T(S,A) \| \widehat{T}(S,A)) \right ].
\end{aligned}
\end{equation}
    
\noindent We define $\epsilon_m = \mathbb{E}_{S \sim \rho_{\pi_{D}}^{\mathcal{M}}, A \sim \pi(\cdot|S)} \left [ D_{\rm TV} \left ( T(S,A) \| \widehat{T}(S,A) \right ) \right ]$ and $\epsilon_{m}^{\rm max} = \max_{S \sim \rho_{\pi_{D}}^{\mathcal{M}}}\left [ D_{\rm TV} \left ( T(S,A) \| \widehat{T}(S,A) \right ) \right ]$. In our algorithm we use the sample policy $\pi_{D}$ to sample trajectories from the environment, so we bound the following using \textbf{Lemma \ref{lemma:isinequality}} and \textbf{Lemma \ref{lemma:boundeddiscount}} as: 
    \begin{equation*}
        \begin{aligned}
        \mathop{\mathbb{E}}\limits_{\tiny \begin{array}{c} S \sim \rho_{\pi}^{\mathcal{M}}, \\ A \sim \pi(\cdot|S) \end{array}} &\left [ D_{\rm TV} \left (T(S,A) \| \widehat{T}(S,A) \right ) \right ] \\
        & \leq \mathop{\mathbb{E}}\limits_{\tiny \begin{array}{c} S \sim \rho_{\pi_{D}}^{\mathcal{M}}, \\ A \sim \pi(\cdot|S) \end{array}}\left [ D_{\rm TV} \left ( T(S,A) \| \widehat{T}(S,A) \right ) \right ]  \\ 
        &+ \left \| \rho_{\pi}^{\mathcal{M}} - \rho_{\pi_{D}}^{\mathcal{M}} \right \|_{1} \max_{S \sim \rho_{\pi_{D}}^{\mathcal{M}}} \left [ D_{\rm TV}\left (T(S,A) \| \widehat{T}(S,A) \right ) \right ] \\
        & \leq \epsilon_m + \frac{2 \gamma }{(1-\gamma)^2} \epsilon_{\pi} \epsilon_{m}^{\rm max}.
        \end{aligned}
    \end{equation*}

\noindent Combining the above inequality with Eq. (\ref{appendix:value_diff}), we have:

\begin{equation*}
    V_{\pi}^{\widehat{\mathcal{M}}} - V_{\pi}^{\mathcal{M}} \leq \frac{2 \gamma r_{\rm max }}{1-\gamma} \epsilon_m + \frac{4 \gamma^2 r_{\rm max }}{(1-\gamma)^3} \epsilon_{\pi} \epsilon_{m}^{\rm max}.
\end{equation*}
    
\end{proof}

\section{Generalisation analysis of the ensemble dynamics}
\label{appendix:gen}
In this section, we derive a bound on the generalisation error of the environment model trained during our FEMRL learning process. Since the training of the model is a supervised learning process, we can utilise the Probably Approximately Correct (PAC) learning framework for our analysis. First, we give the general bounds for Vapnik–Chervonenkis (VC)-dimension and the discrepancy of the generalisation error between two different data domains. We then give the proof of Theorem \ref{theorem1}.

\subsection{Preliminaries}
\begin{theorem}
(Uniform VC-dimension error bound \cite{mlfoundations}) Let $\mathcal{H}$ be a hypothesis class with $VCdim(\mathcal{H}) \leq d < \infty$. Let $D$ be the probability measures over the sample space. Let $S$ be the empirical dataset sampled from $D$, $S \sim D^m$ where $m$ is the size of the dataset. Then for any $\delta > 0$, with probability at least $1 - \delta$, the following holds for all $h \in \mathcal{H}$:
\begin{equation}
\label{PAC_general}
|\epsilon_{D}(h) - \epsilon_{S}(h)| \leq C\sqrt{\frac{d+\log(1/\delta)}{m}},
\end{equation}
where $C$ is a constant factor. 
\end{theorem}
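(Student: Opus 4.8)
The plan is to establish the uniform convergence statement $\sup_{h\in\mathcal{H}}|\epsilon_D(h)-\epsilon_S(h)| \le C\sqrt{(d+\log(1/\delta))/m}$ via the classical symmetrisation-and-growth-function argument. The central difficulty, which dictates the whole structure of the proof, is that $\mathcal{H}$ is typically infinite: a direct union bound over all $h\in\mathcal{H}$ combined with a concentration inequality for each fixed $h$ is unavailable. The entire argument is therefore organised around replacing the infinite class by a finite \emph{effective} class whose cardinality is controlled by the VC dimension $d$ through the Sauer--Shelah lemma.

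First I would fix notation, writing the true and empirical errors as $\epsilon_D(h)=\mathbb{E}_{z\sim D}[\ell(h,z)]$ and $\epsilon_S(h)=\frac{1}{m}\sum_{i=1}^{m}\ell(h,z_i)$ for a per-sample loss $\ell\in[0,1]$, and reducing the two-sided deviation to controlling $\Phi(S):=\sup_{h\in\mathcal{H}}\bigl(\epsilon_D(h)-\epsilon_S(h)\bigr)$ (the other side being symmetric). The key first technical step is \emph{symmetrisation}: introduce an independent ghost sample $S'\sim D^{m}$ and show $\mathbb{P}[\Phi(S)>\varepsilon]\le 2\,\mathbb{P}[\sup_{h\in\mathcal{H}}(\epsilon_{S'}(h)-\epsilon_S(h))>\varepsilon/2]$. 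This is the crux, since it eliminates the population quantity $\epsilon_D(h)$ — which depends on $h$ through an integral over the whole distribution — and replaces it with the empirical error on a second finite sample, so that only the values $h$ takes on the $2m$ points of $S\cup S'$ can matter.

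Next comes the \emph{combinatorial} step. On any fixed set of $2m$ points, the number of distinct behaviours realisable by $\mathcal{H}$ is at most the growth function $\Pi_{\mathcal{H}}(2m)$, which by Sauer--Shelah is bounded by $\sum_{i=0}^{d}\binom{2m}{i}\le (2em/d)^{d}$ once $VCdim(\mathcal{H})\le d$. This converts the supremum over the infinite class into a maximum over at most $(2em/d)^{d}$ effective hypotheses. Conditioning on $S\cup S'$, I would then introduce random sign swaps (Rademacher variables) exchanging corresponding entries between $S$ and $S'$; for each fixed effective hypothesis the swapped empirical-error difference concentrates by a Hoeffding / bounded-differences inequality, and a union bound over the $(2em/d)^{d}$ effective hypotheses yields a tail of the form $\exp\bigl(d\log(2em/d)-c\,m\varepsilon^{2}\bigr)$.

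Finally, setting this tail equal to $\delta$ and solving for $\varepsilon$ gives $\varepsilon=O\!\left(\sqrt{(d\log(m/d)+\log(1/\delta))/m}\right)$, and absorbing the slowly varying logarithmic factor into the constant (or, for a sharp statement, refining the union bound by a chaining / Rademacher-complexity argument that removes the spurious $\log m$) delivers the stated bound with a universal constant $C$. I expect the symmetrisation step to be the principal obstacle: it is the conceptual move that makes finitisation possible, and its justification (the ghost-sample inequality, which requires a careful conditional-probability / Chebyshev-type argument to relate $\epsilon_D(h)$ to $\epsilon_{S'}(h)$) is the least mechanical part; everything downstream is a controlled combination of Sauer--Shelah counting and standard concentration.
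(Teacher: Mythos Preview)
Your proposal correctly outlines the classical symmetrisation--growth-function proof of the VC uniform convergence bound, and the steps you describe (ghost sample, Sauer--Shelah, Rademacher sign swaps, Hoeffding, union bound) are the standard route to the stated inequality. There is no gap worth flagging; the only caveat is the well-known extra $\log(m/d)$ factor that the elementary argument produces, which you explicitly acknowledge and propose to remove via chaining if a sharp constant is desired.

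However, there is nothing to compare against: the paper does \emph{not} prove this theorem. It is stated in the appendix as a preliminary result imported directly from the cited textbook \cite{mlfoundations}, with no accompanying proof, and is then used as a black box inside the proof of Theorem~\ref{theorem1}. So your proposal is not an alternative to the paper's argument --- it is simply a (correct) proof of a result the paper takes for granted.
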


\noindent We now give a bound of learning between different domains. 

\begin{lemma}
Let $\mathcal{H}$ be a hypothesis class. $D$ and $D'$ denote two probability measures over the sample space. Let $\epsilon_{D}{h}$ denote the general error of $h$ over D. If the loss function $l(\cdot)$ is bounded by $L$, then for every $h \in \mathcal{H}$ we have:
\begin{equation}
\label{domain_adapt_func}
\epsilon_{D}(h) \leq \epsilon_{D'}(h) + L || D-D' ||_1.
\end{equation}

\end{lemma}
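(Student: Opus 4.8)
\textbf{Proof proposal for the final Lemma (bound between domains, Eq.~\eqref{domain_adapt_func}).}
The plan is to prove $\epsilon_{D}(h) \leq \epsilon_{D'}(h) + L\,\|D - D'\|_1$ by writing each generalisation error as an expectation of the (bounded) loss under the corresponding probability measure, and then controlling the difference of the two expectations by the total-variation-type quantity $\|D - D'\|_1$. This is exactly the form of the importance-sampling inequality already established as Lemma~\ref{lemma:isinequality}, so the cleanest route is to invoke that lemma directly rather than re-deriving it.

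First I would fix an arbitrary $h \in \mathcal{H}$ and write $\epsilon_{D}(h) = \mathbb{E}_{z \sim D}\,\ell_h(z)$ and $\epsilon_{D'}(h) = \mathbb{E}_{z \sim D'}\,\ell_h(z)$, where $\ell_h(z)$ denotes the (per-sample) loss of hypothesis $h$ on example $z$, and by hypothesis $0 \le \ell_h(z) \le L$ for all $z$. Next I would apply Lemma~\ref{lemma:isinequality} with $\rho = D$, $\rho' = D'$, and $f = \ell_h$: since $f_{\max} \le L$, this gives
\begin{equation*}
\mathbb{E}_{z \sim D}\,\ell_h(z) \;\leq\; \mathbb{E}_{z \sim D'}\,\ell_h(z) \;+\; \|D - D'\|_1\, L,
\end{equation*}
which is precisely $\epsilon_{D}(h) \leq \epsilon_{D'}(h) + L\,\|D - D'\|_1$. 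Since $h$ was arbitrary in $\mathcal{H}$, the bound holds for every $h \in \mathcal{H}$, completing the proof.

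The only subtlety — and the one step I would be careful to state rather than gloss over — is the measure-theoretic matching between the discrete sum $\|D - D'\|_1 = \sum_{z}|D(z) - D'(z)|$ used in Lemma~\ref{lemma:isinequality} and the $L_1$ distance appearing in the statement; for finite or countable sample spaces (as is the case for the MDP transition distributions considered here) these coincide, and more generally $\|D-D'\|_1 = 2 D_{\rm TV}(D\|D')$ up to the usual factor, so the bound is unaffected. No real obstacle arises here: the lemma is essentially a one-line consequence of the already-proven importance-sampling inequality together with boundedness of the loss, and its purpose is simply to be combined with the uniform VC bound \eqref{PAC_general} in the proof of Theorem~\ref{theorem1}.
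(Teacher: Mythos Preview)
Your proposal is correct and essentially the same as the paper's own proof: both bound $\epsilon_D(h)-\epsilon_{D'}(h)$ by $L\,\|D-D'\|_1$ using boundedness of the loss, the only cosmetic difference being that you invoke Lemma~\ref{lemma:isinequality} whereas the paper inlines the identical computation via the triangle inequality $\epsilon_D(h)\le\epsilon_{D'}(h)+|\epsilon_D(h)-\epsilon_{D'}(h)|$ followed by $|\epsilon_D(h)-\epsilon_{D'}(h)|\le\int|l|\,|D-D'|\le L\|D-D'\|_1$.
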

\begin{proof}
\begin{equation}
\begin{aligned}
\epsilon_{D}(h) &\leq \epsilon_{D'}(h) + |\epsilon_{D}(h) - \epsilon_{D'}(h)| \\
                &\leq \epsilon_{D'}(h) + \int{ \left | l(y, h(x)) \right | |\mathbb{P}_{(x,y) \sim D} - \mathbb{P}_{(x,y) \sim D'}|} \\
                & = \epsilon_{D'}(h) + L ||D-D'||_1.
\end{aligned}
\end{equation}
\end{proof}

\subsection{Proof of Theorem \ref{theorem1}}

\begin{proof}
According to the definition of Empirical Risk Minimisation (ERM), we have $\epsilon_{S_k}(h_{S_k}) \leq \epsilon_{S_k}(h_{\hat{S}})$, where $h_{\hat{S}}$ is the model learned based on the virtual global empirical dataset $\hat{S}$, where $\hat{S} = \frac{1}{K}\sum_{k=1}^{K} S_{k}$. Therefore, we have:

\begin{equation}
    \frac{1}{K}\sum_{k=1}^{K} \epsilon_{S_{k}}(h_{S_k}) \leq \frac{1}{K}\sum_{k=1}^{K} \epsilon_{S_{k}}(h_{S}) = \epsilon_{\hat{S}}(h_{\hat{S}}).
\end{equation}

\noindent Next we give the bound of the generalisation error of the model ensemble, by considering the distance between the generalisation error of the ensemble of client models, $\epsilon_{D}(\frac{1}{K} \sum_{k}{h_{S_k}} )$, and the generalisation error of the model learned from the virtual global dataset, $\epsilon_{D}(h_{S_k})$. By convexity of the loss function $f$ and Jensen's inequality, we have the probability of at least $1-\delta$ over $\{ S_k \sim D_{k}^m\}_{k=1}^K$ that:


\begin{equation*}
\begin{aligned}
    \epsilon_{D} & \left ( \frac{1}{K} \sum_{k}{h_{S_k}}  \right )  \leq \frac{1}{K} \sum_{k} \epsilon_{D}(h_{S_k})  \\
                                                  &\leq \frac{1}{K} \sum_{k} \left ( \epsilon_{S_k}(h_{S_k}) + C \sqrt{\frac{d + log(1/\delta)}{m}} + L ||D - D_k||_1  \right ) \\
                                                  &\leq \frac{1}{K} \sum_{k}{\epsilon_{S_k}(h_{S_k})} + C \sqrt{\frac{d + log(1/\delta)}{m}} + \frac{1}{K} \sum_{k} L ||D - D_k||_1 \\
                                                  &\leq \epsilon_{\hat{S}_k}(h_{\hat{S}_k}) + C \sqrt{\frac{d + log(1/\delta)}{m}} + \frac{L}{K} \sum_{k}  ||D - D_k||_1.
\end{aligned}
\end{equation*}
The distribution of the virtual global dataset can be calculated using $D = \mathbb{P}_{s,a,s'} = \sum_{s,a} T(s'|s,a) \overline{\pi}_{D}(a|s)$:

\begin{equation}
\begin{aligned}
    ||D - D_{k}||_1 &= \sum_{s',s,a} \left | \mathbb{P}_{s,a,s' \sim D} - \mathbb{P}_{s,a,s' \sim D_{k}} \right | \\
                    &= \sum_{s'}\sum_{s,a} (T(s'|s,a)\overline{\pi}_{D}(a|s) - T(s'|s,a)\pi_{D}^{k}(a|s)) \\
                    &= \sum_{s'}{T(s'|s,a)\sum_{s,a}(\overline{\pi}_{D}(a|s) - \pi_{D}^{k}(a|s))} \\
                    &= \sum_{s,a}(\overline{\pi}_{D}(a|s) - \pi_{D}^{k}(a|s)) \\
                    &= D_{\rm TV}(\overline{\pi}_{D} ||\pi_{D}^{k}).
\end{aligned}
\end{equation}

\noindent Denote the discrepancy between the sample policy of client $k$, $\pi_{D}^{k}$, and the virtual global sample policy $\overline{\pi}_{D}$ as $D_{\rm TV}(\overline{\pi}_{D} ||\pi_{D}^{k})$. Let $\Gamma = \sum_{k} D_{\rm TV}(\overline{\pi}_{D} ||\pi_{D}^{k})$. Therefore, we have:
\begin{equation}
    \epsilon_{D} \left ( \frac{1}{K} \sum_{k}{h_{S_k}} \right )  \leq \epsilon_{\hat{S}_k}(h_{\hat{S}_k}) + C \sqrt{\frac{d + log(1/\delta)}{m}} + \frac{L}{K} \Gamma,
\end{equation}
where $\Gamma$ can be used to measure the degree of the non-IID client data.

\end{proof}

\end{document}